\theoremstyle{plain}
\newtheorem{theorem}{Theorem}[section]
\theoremstyle{definition}
\newtheorem{assumption}[theorem]{Assumption}
\theoremstyle{remark}
\def\eqref#1{equation~\ref{#1}}
\def\1{\bm{1}}
\def\vmu{{\bm{\mu}}}
\def\vd{{\bm{d}}}
\def\mS{{\bm{S}}}
\def\mSigma{{\bm{\Sigma}}}
\DeclareMathAlphabet{\mathsfit}{\encodingdefault}{\sfdefault}{m}{sl}
\SetMathAlphabet{\mathsfit}{bold}{\encodingdefault}{\sfdefault}{bx}{n}
\newcommand{\tens}[1]{\bm{\mathsfit{#1}}}
\def\tA{{\tens{A}}}
\def\tE{{\tens{E}}}
\def\tV{{\tens{V}}}
\def\gG{{\mathcal{G}}}
\def\gH{{\mathcal{H}}}
\def\sV{{\mathbb{V}}}
\newcommand{\etens}[1]{\mathsfit{#1}}
\def\etA{{\etens{A}}}
\def\etE{{\etens{E}}}
\def\etV{{\etens{V}}}
\newcommand{\E}{\mathbb{E}}
\newcommand{\R}{\mathbb{R}}
\icmltitlerunning{An Empirical Study of Realized GNN Expressiveness}
\begin{document}

\twocolumn[
\icmltitle{An Empirical Study of Realized GNN Expressiveness}



\icmlsetsymbol{equal}{*}

\begin{icmlauthorlist}
\icmlauthor{Yanbo Wang}{pku}
\icmlauthor{Muhan Zhang}{pku}
\end{icmlauthorlist}

\icmlaffiliation{pku}{Institute of Artificial Intelligence, Peking University, Beijing, China}

\icmlcorrespondingauthor{Muhan Zhang}{muhan@pku.edu.cn}

\icmlkeywords{Graph Neural Network, Expressiveness, Empirical Study}

\vskip 0.3in
]



\printAffiliationsAndNotice{}  

\begin{abstract}

    Research on the theoretical expressiveness of Graph Neural Networks~(GNNs) has developed rapidly, and many methods have been proposed to enhance the expressiveness. 
    However, most methods do not have a uniform expressiveness measure except for a few that strictly follow the $k$-dimensional Weisfeiler-Lehman ($k$-WL) test hierarchy, leading to difficulties in quantitatively comparing their expressiveness. 
    Previous research has attempted to use datasets for measurement, but facing problems with difficulty (any model surpassing 1-WL has nearly 100\% accuracy), granularity (models tend to be either 100\% correct or near random guess), and scale (only several essentially different graphs involved). 
    To address these limitations, we study the realized expressive power  that a practical model instance can achieve using a novel expressiveness dataset, BREC, which poses greater difficulty (with up to 4-WL-indistinguishable graphs), finer granularity (enabling comparison of models between 1-WL and 3-WL), a larger scale (consisting of 800 1-WL-indistinguishable graphs that are non-isomorphic to each other).
    We synthetically test 23 models with higher-than-1-WL expressiveness on BREC. 
    Our experiment gives the first thorough measurement of the realized expressiveness of those state-of-the-art beyond-1-WL GNN models and reveals the gap between theoretical and realized expressiveness. 
    Dataset and evaluation codes are released at: \href{https://github.com/GraphPKU/BREC}{https://github.com/GraphPKU/BREC}.

\end{abstract}

\section{Introduction}

GNNs have been extensively utilized in bioinformatics, recommender systems, social networks, and others, yielding remarkable outcomes~\citep{duvenaud2015convolutional,barabasi2011network,fan2019graph,wang2018ripplenet,berg2017graph,zhou2020graph}. Despite impressive empirical achievements, related investigations have revealed that GNNs exhibit limited abilities to distinguish some similar but non-isomorphic graphs. In practical situations, the inability to recognize specific structures, such as benzene rings (6-member rings), may cause misleading learning results.
\citet{xu2018powerful,morris_weisfeiler_2021} established a connection between the expressiveness of message-passing neural networks (MPNNs) and the WL test for graph isomorphism testing, demonstrating that MPNN's upper bound is 1-WL. Numerous subsequent studies have proposed GNN variants with enhanced expressiveness~\citep{bevilacqua2021equivariant,cotta2021reconstruction,you2021identity,zhang2021nested}.

Given the multitude of models employing different approaches, such as feature injection, equivariance maintenance, and subgraph extraction, a unified framework that can theoretically compare the expressive power among various variants is highly desirable. In this regard, several attempts have been made under the $k$-WL architecture. \citet{maron2018invariant} propose the concept of $k$-order invariant/equivariant graph networks, which unify linear layers while preserving permutation invariance/equivariance. Additionally, \citet{frasca2022understanding} unify recent subgraph GNNs and establish that their expressiveness upper bound is 3-WL. \citet{zhang2023complete} further construct a comprehensive expressiveness hierarchy for subgraph GNNs. Nonetheless, the magnitude of the gaps remains unknown. Furthermore, there exist methods that are difficult to categorize within the $k$-WL hierarchy. For instance, \citet{papp2022theoretical} propose four extensions of GNNs, each of which cannot strictly compare with the other. Similarly, \citet{feng2022how} propose a GNN that is partially stronger than 3-WL yet fails to distinguish many 3-WL-distinguishable graphs. In a different approach, \citet{huang2022boosting} propose evaluating expressiveness by enumerating specific significant substructures, such as 6-cycles. \citet{zhangrethinking} introduces biconnectivity as a measurement. \citet{zhang2024weisfeilerlehman} employs homomorphism to provide a more intricate framework.

Without a unified theoretical characterization of expressiveness, employing expressiveness datasets for empirical testing proves valuable. Notably, three expressiveness datasets, EXP, CSL, and SR25, and some other combinations of difficult graphs have been introduced by \citet{abboud2020surprising,murphy2019relational,balcilar2021breaking,bouritsas2022improving} and have found widespread usage in recent studies. However, the empirical results failed to contribute further to the research due to notable limitations. Firstly, they lack sufficient difficulty. The EXP and CSL datasets solely consist of examples where 1-WL fails, and most recent GNN variants have achieved perfect accuracy on these datasets. Secondly, the granularity of these datasets is too coarse, which means that graphs in these datasets are generated using a single method, resulting in a uniform level of discrimination difficulty. Consequently, the performance of GNN variants often falls either at random guessing (completely indistinguishable) or 100\% (completely distinguishable), thereby hindering the provision of a nuanced measure of expressiveness. Lastly, these datasets suffer from small sizes, typically comprising only a few substantially different graphs, raising concerns of incomplete measurement.

To overcome the limitations of previous datasets for a more meaningful empirical evaluation of realized expressiveness, i.e., the expressiveness that a practical model instance can achieve, we first propose BREC, a new expressiveness dataset containing 1-WL-indistinguishable graphs in 4 major categories: Basic, Regular, Extension, and CFI graphs. Compared to previous datasets, BREC has a greater difficulty (up to 4-WL-indistinguishable), finer granularity (can compare models between 1-WL and 3-WL), and larger scale (800 non-isomorphic graphs organized into 400 pairs, which can be easily extended to 319600 pairs or even more).

Due to the increased size and diversity of the dataset, the traditional classification task may not be suitable for training-based evaluation methods relying on generalization. Thus, we propose a novel evaluation procedure based on directly comparing the discrepancies between model outputs to test pure practical expressiveness. Acknowledging the impact of numerical precision owning to tiny differences between graphs, we propose reliable paired comparisons building upon a statistical method~\citep{fisher1992statistical,johnson_applied_2007}, which offers a precise error bound. Experiments verify that it aligns well with known theoretical results.

Finally, we comprehensively compared 23 representative beyond-1-WL models on BREC. Our experiments for the first time give a \textbf{reliable empirical comparison} of state-of-the-art GNNs' realized expressiveness. Our results facilitate gaining deeper insights into various schemes to enhance GNNs' expressiveness. On BREC, GNN accuracies range from 41.5\% to 70.2\%, with I$^2$-GNN~\citep{huang2022boosting} performing the best. The 70.2\% highest accuracy also implies that the dataset is \textbf{far from saturation}. Our evaluation code is in \href{https://github.com/GraphPKU/BREC}{https://github.com/GraphPKU/BREC}. 

\section{Limitations of Existing Datasets}
\textbf{Preliminary.}
We utilize the notation $\{\}$ to represent sets and $\{\!\{\}\!\}$ to represent multisets. The cardinality of a (multi)set $\mathbb{S}$ is denoted as $|\mathbb{S}|$. The index set is denoted as $[n]={1,\dots,n}$.
A graph is denoted as $\mathcal{G} = (\mathbb{V}(\mathcal{G}), \mathbb{E}(\mathcal{G}))$, where $\mathbb{V}(\mathcal{G})$ represents the set of \textit{nodes} or \textit{vertices} and $\mathbb{E}(\mathcal{G})$ represents the set of \textit{edges}. Without loss of generality, we assume $|\mathbb{V}(\mathcal{G})|=n$ and $\mathbb{V}(\mathcal{G}) = [n]$. 

The permutation or reindexing of $\mathcal{G}$ is denoted as $\mathcal{G}^{\pi} = (\mathbb{V}(\mathcal{G}^{\pi}), \mathbb{E}(\mathcal{G}^{\pi}))$ with the permutation function $\pi : [n] \rightarrow [n]$, s.t. $(u, v) \in \E(\mathcal{G}) \iff (\pi(u), \pi(v)) \in \E(\mathcal{G}^{\pi})$.
Here node and edge features are excluded from definitions for briefness. Additional discussions are in Appendix~\ref{app:node_features}.

\textbf{Graph Isomorphism (GI) Problem.}
Two graphs $\mathcal{G}$ and $\mathcal{H}$ are considered isomorphic (denoted as $\mathcal{G} \simeq \mathcal{H}$) if $\exists~ \phi$(a bijection mapping) :$ \mathbb{V}(\mathcal{G}) \rightarrow \mathbb{V}(\mathcal{H})$ s.t. $(u, v) \in \mathbb{E}(\mathcal{G})$ iff. $(\phi(u), \phi(v)) \in \mathbb{E}(\mathcal{H})$. GI is essential in expressiveness. Only if a GNN successfully distinguishes two non-isomorphic graphs can they be assigned different labels. Some researchers \citep{chen2019equivalence,geerts2022expressiveness} indicate the equivalence between GI and function approximation, underscoring the importance of GI. However, we currently do not have polynomial-time algorithms for solving the GI problem. A naive solution involves iterating all $n!$ permutations to test whether such a bijection exists.

\textbf{Weisfeiler-Lehman algorithm (WL).}
WL is a well-known isomorphism test relying on color refinement \citep{weisfeiler1968reduction}. In each iteration, WL assigns a state (or color) to each node by aggregating information from its neighboring nodes' states. This process continues until convergence, resulting in a multiset of node states representing the final graph representation. While WL effectively identifies most non-isomorphic graphs, it may fail in certain simple graphs, leading to the development of extended versions. One such extension is $k$-WL, which treats each $k$-tuple of nodes as a unit for aggregating information. Another slightly different method \citep{63543} is also referred to as $k$-WL. To avoid confusion, we follow \citet{morris_weisfeiler_2021} to call the former $k$-WL and the latter $k$-FWL. Further information can be found in Appendix~\ref{app:wl}.

Given the significance of GI and WL, several expressiveness datasets have been introduced, with the following three being the most frequently utilized. We selected a pair of graphs from each dataset, illustrated in Figure~\ref{fig:previous}. Detailed statistics for these datasets are presented in Table~\ref{tab:statistics}.

\begin{figure*}
  \centering
    \subfigure[EXP dataset core pair sample]{   
        \begin{minipage}[t]{0.3\textwidth}
        \centering    
        \includegraphics[width=1.0\linewidth]{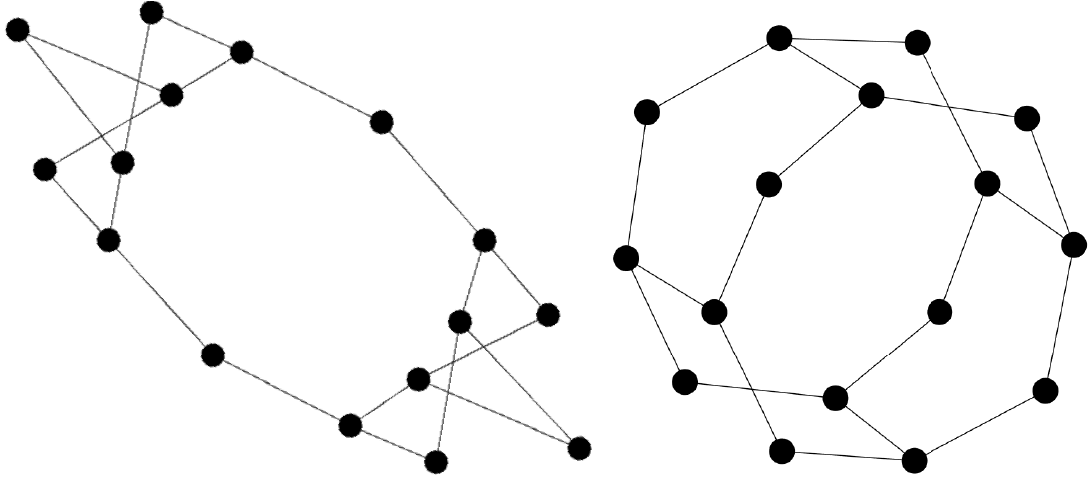}  
        \end{minipage}
    }
    \subfigure[CSL graph($m=10,r=2/3$)]{   
        \begin{minipage}[t]{0.3\textwidth}
        \centering    
        \includegraphics[width=1.0\linewidth]{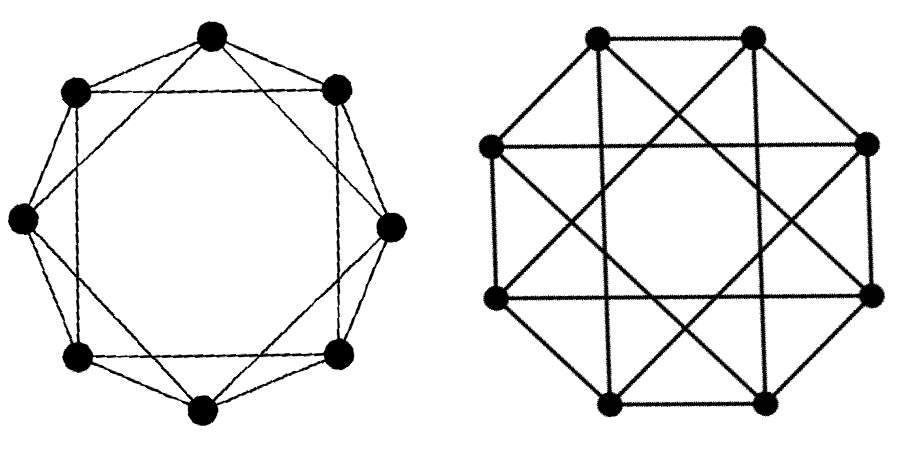}  
        \end{minipage}
    }
    \subfigure[SR25 dataset sample]{   
        \begin{minipage}[t]{0.3\textwidth}
        \centering    
        \includegraphics[width=1.0\linewidth]{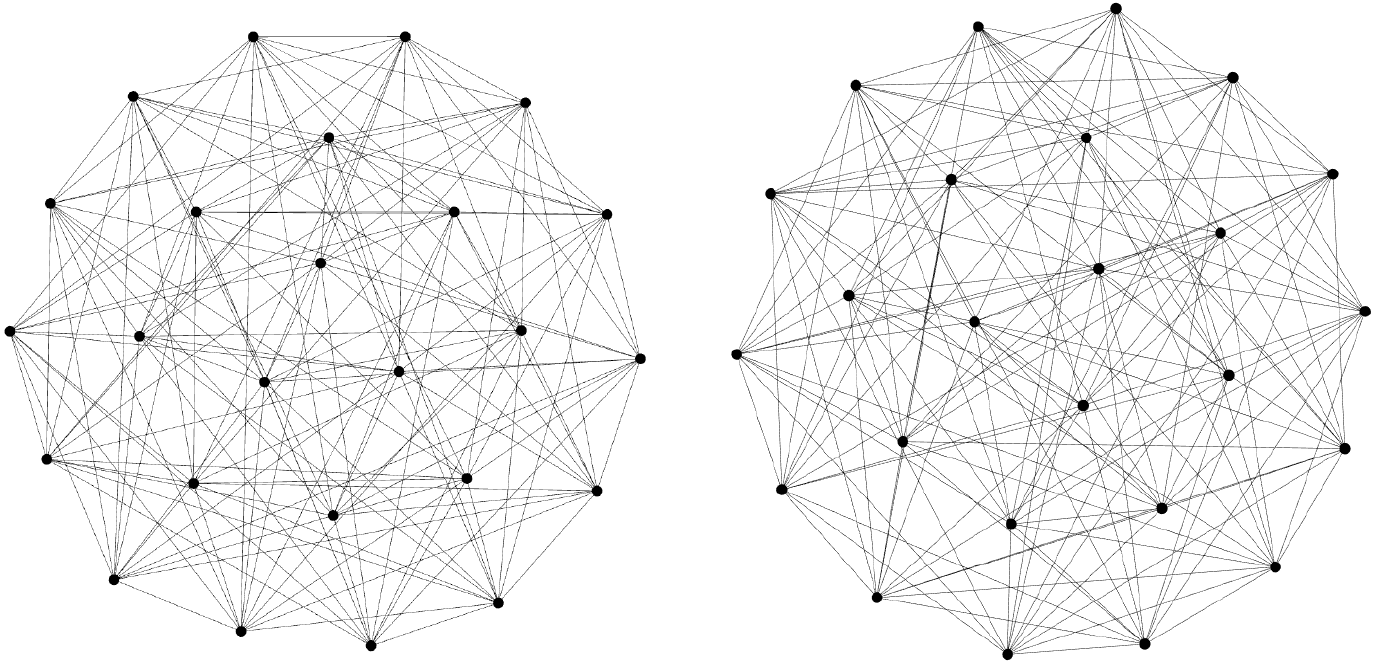}  
        \end{minipage}
    }
\vspace{-0.15in}
  \caption{Sample graphs in previous datasets}
  \label{fig:previous}
\vspace{-0.1in}
\end{figure*}

\begin{table*}
\caption{Dataset statistics}
\vspace{0.05in}
\label{tab:statistics}
\begin{center}

\begin{threeparttable}
\begin{tabular}{lccccccr}
\toprule
Dataset &  \# Graphs & \# Core graphs\tnote{a} &\# Nodes & Distinguishing difficulty & Evaluation metrics\\
\midrule
EXP & 1200& 6 & 33-73 & 1-WL-indistinguishable & 2-way classification\\
CSL &   150& 10 &41 & 1-WL-indistinguishable & 10-way classification\\
SR25 &  15& 15 &25 & 3-WL-indistinguishable & 15-way classification\\
\textbf{BREC} & 800 & \textbf{800} & \textbf{10-198} & \textbf{1-WL to 4-WL-indistinguishable} & Reliable Paired Comparisons\\
\bottomrule
\end{tabular}
\begin{tablenotes}
    \footnotesize
    \item[a] Core graphs represent graphs that actually serve to measure expressiveness.
\end{tablenotes}
\end{threeparttable}
\end{center}
\end{table*}

\textbf{EXP Dataset.}
This dataset is generated pairwise. Each graph in a pair includes two disconnected components, the core component and planar component, where the former are two 1-WL-indistinguishable counterexamples while the latter are identical and only for adding noise. The task is binary classification based on whether core component satisfies the SAT condition. Although it is formally consistent with general datasets, the insufficient difficulty and number of different core components (\textbf{only three substantially different pairs}) results in most recent GNNs achieving nearly 100\% accuracy, making detailed comparisons unavailable.



\textbf{CSL Dataset.}
This dataset consists of Circulant Skip Links (CSL) graphs, which are 1-WL-indistinguishable 4-degree regular graphs. 10 distinct CSL graphs are generated and reindexed 15 times, resulting in the dataset with 150 graphs. The task is to train a 10-way classification model distinguishing each type. Because of the relatively low difficulty and their fixed structure (only \textbf{ten essentially different graphs with the same number of nodes and degree}), many recent expressive GNN models achieve close to 100\% accuracy. More details about CSL graphs are in Appendix~\ref{app:csl}.


\textbf{SR25 Dataset.}
This dataset consists of 15 3-WL-indistinguishable Strongly Regular Graphs (SR) with parameter srg(25,12,5,6)\footnote{Strongly regular graphs can be described by four parameters. More details can be found in Appendix \ref{sec:regular_relationship}}. In practice, SR25 is transformed into a 15-way classification problem for mapping each graph into a different class where the training and test graphs overlap. Most methods obtain 6.67\% (1/15) accuracy due to 3-WL's high expressiveness. However, some methods partially surpassing 3-WL can even achieve 100\% accuracy.


These three datasets have limitations regarding difficulty, granularity, and scale. In terms of difficulty, they are all bounded by 3-WL, failing to evaluate models (partly) beyond 3-WL~\citep{feng2022how}. In terms of granularity, the graphs are generated in one way with repetive parameters, which easily leads to a 0/1 step function of model performance and cannot measure subtle differences between models. In terms of scale, the number of substantially different graphs in the datasets is small, and the test results may be incomplete to reflect expressiveness measurement.

\section{BREC: A New Dataset for Expressiveness}
\begin{figure*}
\centering
\subfigure[Basic]{   
    \begin{minipage}[t]{0.17\textwidth}
    \centering    
    \includegraphics[width=0.9\linewidth]{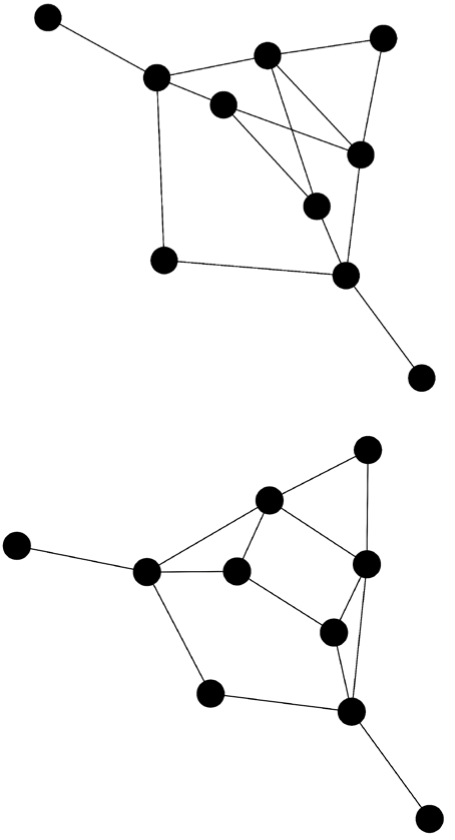}  
    \end{minipage}
}
\subfigure[Simple Regular]{   
    \begin{minipage}[t]{0.17\textwidth}
    \centering    
    \includegraphics[width=0.9\linewidth]{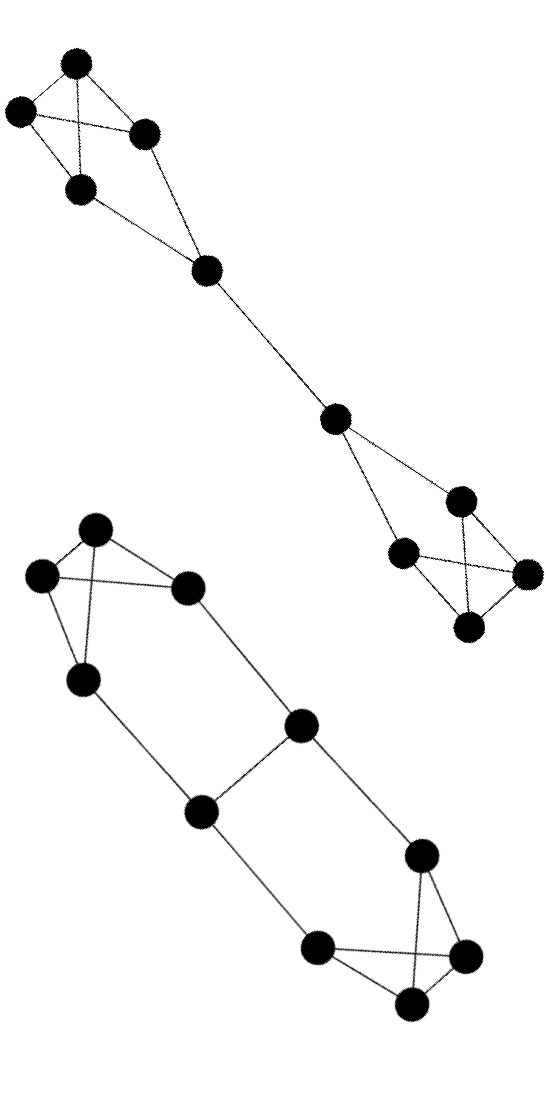}  
    \end{minipage}
}
\subfigure[Strongly regular]{   
    \begin{minipage}[t]{0.17\textwidth}
    \centering    
    \includegraphics[width=0.9\linewidth]{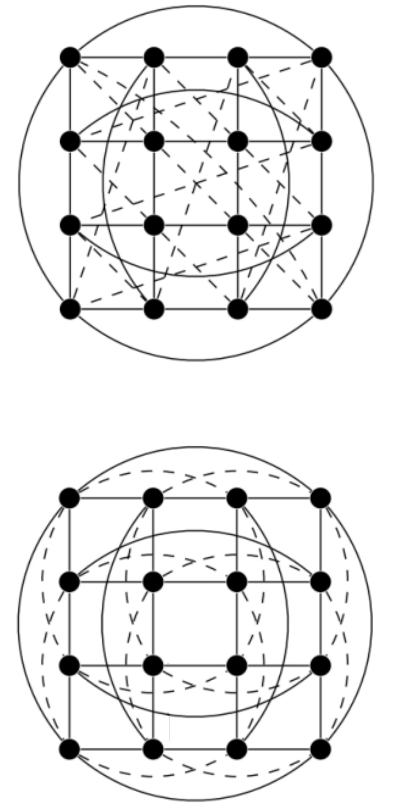}  
    \end{minipage}
}
\subfigure[Extension]{   
    \begin{minipage}[t]{0.17\textwidth}
    \centering    
    \includegraphics[width=0.9\linewidth]{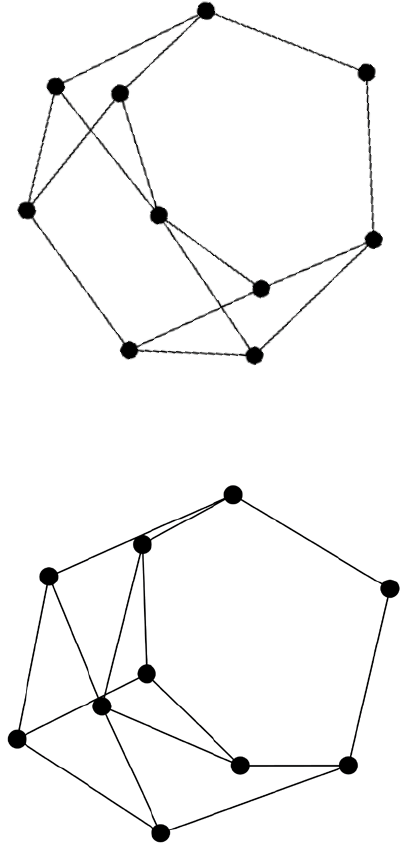}  
    \end{minipage}
}
\subfigure[CFI]{   
    \begin{minipage}[t]{0.17\textwidth}
    \centering    
    \includegraphics[width=0.9\linewidth]{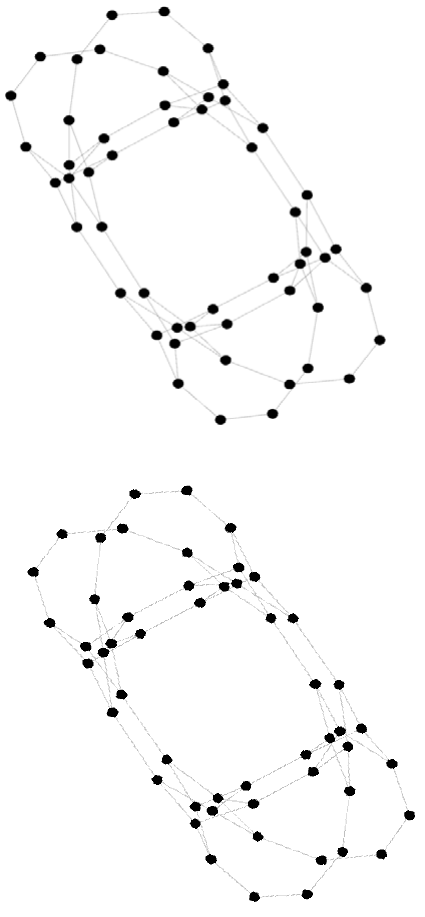}  
    \end{minipage}
}
\vspace{-0.15in}
\caption{BREC dataset samples}
\label{fig:BREC}
\end{figure*}

We propose a new expressiveness dataset, BREC, to address the limitations regarding difficulty, granularity, and scale. It consists of four major categories of graphs: Basic, Regular, Extension, and CFI. Basic graphs include relatively simple 1-WL-indistinguishable graphs. Regular graphs include four types of subcategorized regular graphs. Extension graphs include special graphs that arise when comparing four kinds of GNN extensions~\citep{papp2022theoretical}. CFI graphs include graphs generated by CFI methods\footnote{CFI is short for Cai-Furer-Immerman algorithm, which can generate counterexample graphs for any k-WL.}~\citep{63543} with high difficulty. Some samples are shown in Fig~\ref{fig:BREC}.

\subsection{Dataset Composition}

BREC includes 800 non-isomorphic graphs arranged in a pairwise manner to construct 400 pairs, with detailed composition as follows: (For detailed generation process, please refer to Appendix~\ref{app:generation})

\textbf{Basic Graphs.} 
Basic graphs consist of 60 pairs of 1-WL-indistinguishable graphs from an exhaustive search and intentionally designed to be non-regular. This part can be regarded as an augmentation of the EXP dataset with similar difficulty. Nevertheless, it offers a greater abundance of instances and more intricate graph patterns. The relatively small size also facilitates visualization and analysis.

\textbf{Regular Graphs.}
Regular graphs consist of 140 pairs of regular graphs, subcategoried to simple regular graphs, strongly regular graphs, 4-vertex condition graphs and distance regular graphs. A regular graph refers to a graph where all nodes possess the same degree. It is 1-WL-indistinguishable, and some studies delve into the analysis of GNN expressiveness from this perspective~\citep{li2020distance,zhang2021nested}. By expanding upon the existing subdivisions of regular graphs, this section widens the range of difficulty and complexity. Moreover, unlike the previous datasets, regular graphs are not limited to sharing identical parameters for all graphs within each category, greatly enhancing diversity.
More details about regular graphs can be found in Appendix~\ref{sec:regular_relationship}.

\textbf{Extension Graphs.}
Extension graphs consist of 100 pairs of graphs inspired by~\citet{papp2022theoretical}. They proposed 4 types of theoretical GNN extensions: $k$-WL hierarchy ($k$-WL), substructure-counting ($S_k$), $k$-hop-subgraph ($N_k$), and node-marking ($M_k$). These extensions do not possess a strict comparative relationship with each other. Leveraging the insights from theoretical analysis and some empirically derived findings, we generated and sampled graphs between 1-WL and 3-WL distinguishing difficulty to improve granularity. For more detailed definition of the GNN extensions, please refer to Appendix~\ref{app:extension}.

\textbf{CFI Graphs.}
CFI graphs consist of 100 pairs of graphs inspired by~\citet{63543}. They developed a method to generate graphs distinguishable by $k$-WL but not by $(k-1)$-WL for any $k$. We firstly implement it and create 100 pairs of graphs spanning up to 4-WL-indistinguishable, even surpassing the current research's upper bounds.
As the most challenging part, it pushes the upper limit of difficulty even higher. Furthermore, the graph sizes in this section are larger than other parts (up to 198 nodes). It intensifies the challenge of the dataset, demanding a model's ability to process graphs with heterogeneous sizes effectively.

\subsection{Advantages}

\textbf{Difficulty.}
The CFI graphs raise difficulty to 4-WL-indistinguishable. The newly involved 4-vertex condition and distance regular graphs also pose greater challenges.

\textbf{Granularity.}
The different classes of graphs in BREC exhibit varying difficulty levels, each contributing to the dataset in distinct ways. 
Basic graphs contain fundamental 1-WL-indistinguishable graphs as a starting point.
Regular graphs extend the CSL and SR25 datasets. The major components of regular graphs are simple regular graphs and strongly regular graphs, where 1-WL and 3-WL fail, respectively. Including 4-vertex condition graphs and distance regular graphs further elevates the complexity.
Extension graphs bridge the gap between 1-WL and 3-WL, offering a finer-grained comparison for evaluating models beyond 1-WL.
CFI graphs span the difficulty up to 4-WL-indistinguishable.
By comprehensive compositions, BREC explores the boundaries of graph pattern distinguishability.

\textbf{Scale.}
While previous datasets relied on only a few essentially different graphs, BREC utilizes a collection of 800 different graphs organized as 400 pairs. 
This significant increase in the number of graphs greatly enhances the diversity. 
The larger graph set in BREC also contributes to a more varied distribution of graph statistics (Appendix~\ref{app:statistics}). In contrast, previous datasets such as CSL and SR25 only have the same number of nodes and degrees across all graphs. 
BREC can also be easily further scaled up to 319600 pairs by iterating over all possible compare combinations, or even more by adding more graphs as it is a small sample of what we have done in the generation process (Appendix~\ref{app:generation}). However, we deliberately did not scale it to facilitate a lower testing burden and balance the distribution of graphs with different difficulties. The experiments also verify that current size is enough to find subtle differences between models.

\section{RPC: A New Evaluation Diagram}
\begin{figure}
\centering
\subfigure[Training Framework]{   
    \begin{minipage}[t]{0.42\textwidth}
    \centering    
    \includegraphics[width=1.0\linewidth]{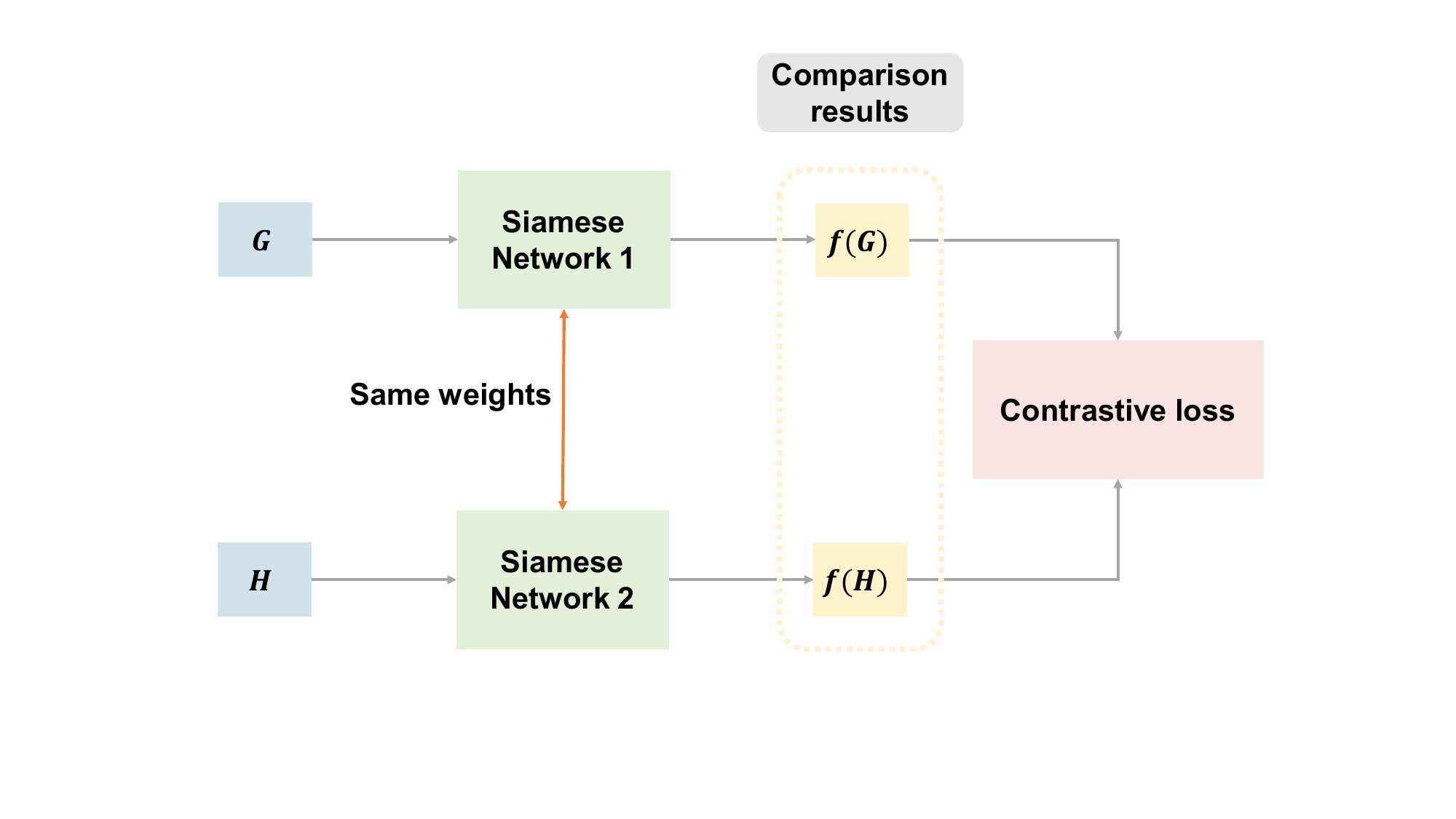}  
    \end{minipage}
}
\subfigure[RPC Pipeline]{   
    \begin{minipage}[t]{0.5\textwidth}
    \centering    
    \includegraphics[width=1.0\linewidth]{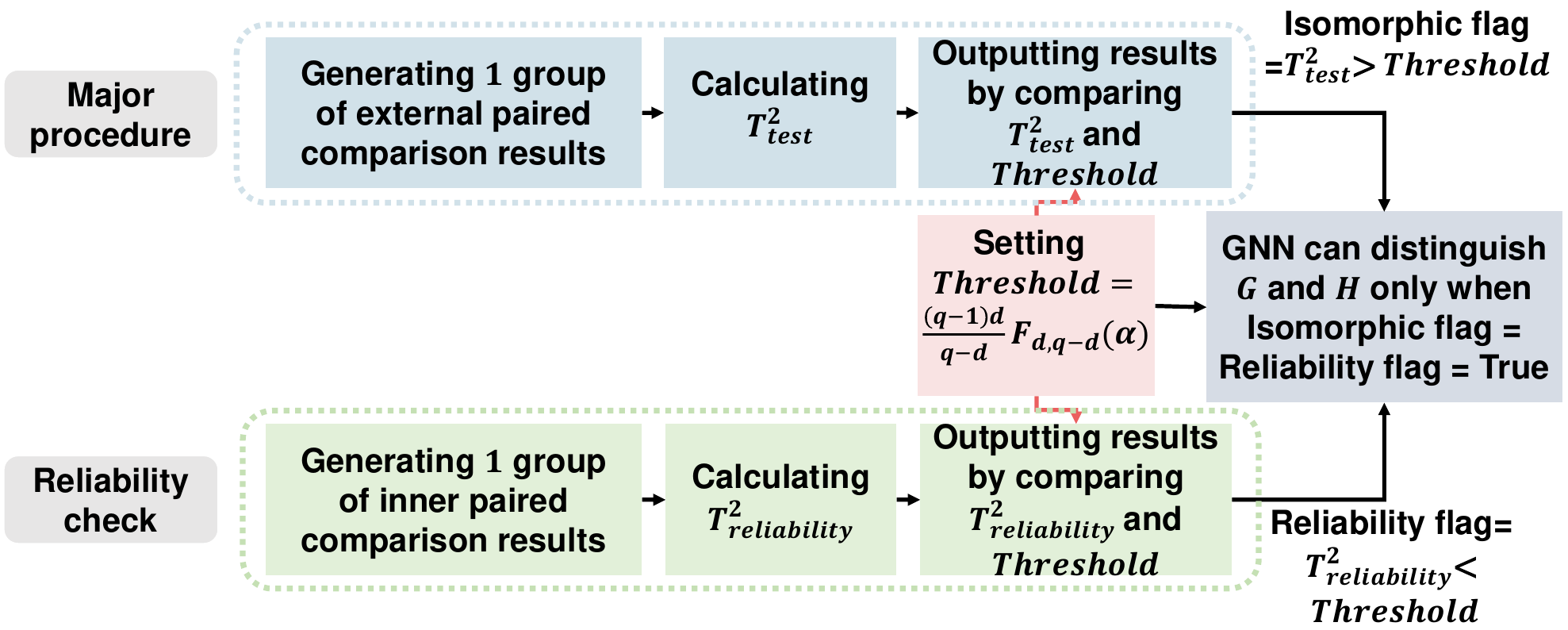}  
    \end{minipage}
}

\vspace{-0.15in}
\label{fig:eval}
\caption{Evaluation Method. (a) illustrates the training framework, where two non-isomorphic graphs are input into a Siamese network architecture to increase the distance between their respective representations. (b) presents the RPC (Reliability and Performance Characterization) pipeline, which comprises two main components: the Major Procedure and the Reliability Check. The Major Procedure operates on non-isomorphic pairs to quantify the external differences, whereas the Reliability Check is performed on isomorphic pairs to assess internal fluctuations. We calculate the corresponding $T^2$-statistics and compare them with a predefined threshold. A pair is considered successfully distinguished only if it passes both tests.}
\end{figure}
This section introduces a novel training framework and evaluation diagram for BREC. Unlike previous datasets, BREC departs from the conventional classification setting, where each graph is assigned a label, a classification model is trained, and the accuracy on test graphs serves as the measure of expressiveness. The labeling schemes used in previous datasets like semantic labels based on SAT conditions in EXP, or distinct labels for essentially different graphs in CSL and SR25, do not apply to BREC. 
There are two primary reasons. First, BREC aims to enrich the diversity of graphs, which precludes using a semantic label tied to SAT conditions, as it would significantly limit the range of possible graphs. Second, assigning a distinct label to each graph in BREC would result in an 800-class classification problem, where performance could be influenced by factors other than expressiveness.
Our core idea is to measure models' practical "separating power" directly. Thus BREC is organized in pairs, where each pair is individually tested (i.e., we train an individual GNN for each pair) to determine whether a GNN can distinguish them.
By adopting a pairwise evaluation method, BREC provides a more focused measure of models' expressiveness, aligning to assess distinguishing ability.

Nevertheless, how can we say a pair of graphs is successfully distinguished? Previous researchers tend to set a small threshold (like 1E-4) manually. If the embedding distance between them is larger than the threshold, the GNN is considered can distinguish them. 
This setting may satisfy previous datasets' requirements due to the relatively simple construction. However, it lacks \textbf{reliability} on numerical precision with more complex graphs where tiny differences may even overlap with numerical fluctuations.
In order to yield dependable outcomes, we propose an evaluation method measuring both \textbf{external difference} and \textbf{internal fluctuations}. Furthermore, we introduce a training framework for pairwise data, employing the siamese network design~\citep{koch2015siamese} and contrastive loss~\citep{hadsell2006dimensionality,wang2018cosface}. The pipeline is depicted in Fig~\ref{fig:eval}(a). We also did an ablation study on the effectiveness of our RPC, shown in Appendix~\ref{app:rpc_ablation}.

\subsection{Training Framework}
We adhere to the siamese network design~\citep{koch2015siamese} to train a model to distinguish each pair of graphs. The central component consists of two identical models maintaining identical parameters. For a pair of graphs inputted, it outputs a pair of embeddings. Subsequently, the difference between them is assessed using cosine similarity. The loss function is formulated as follows:
\begin{equation}
    L(f, \gG, \gH) = \text{Max}(0, \frac{f(\gG) \cdot f(\gH)}{|| f(\gG) || ~ || f(\gH) ||} - \gamma),
\end{equation}
where the GNN model $f : \{\gG\} \rightarrow \R^{d}$, $\gG$ and $\gH$ are two graphs, and $\gamma$ is a margin hyperparameter (set to 0 in our experiments). The loss function aims to promote the cosine similarity value lower than $\gamma$, thereby encouraging a greater separation between the two graph embeddings.

The training process \textbf{yields several benefits} for the models. Firstly, it helps the GNN to achieve its theoretical expressiveness. The GNN expressiveness analysis focuses primarily on the network's structure without imposing any constraints on its parameters, which means it is exploring the expressiveness of \textbf{a group of functions}. If a model with particular parameters can distinguish a pair of graphs, the model's design and structure are considered possessing sufficient expressiveness. However, it is impractical to iterate all possible parameter combinations to test the real upper bound. Hence, training can \textbf{realize searching} in the function space, enabling models to achieve better practical expressiveness.
Furthermore, training aids components to \textbf{possess specific properties}, such as injectivity and universal approximation, which are vital for attaining theoretical expressiveness. These properties require specific parameter configurations, but randomly initialized parameters may not satisfy.
Moreover, through training, model-distinguishable pairs are \textbf{more easily discriminated} from model-indistinguishable pairs, which helps reducing the false negative rate caused by numerical precision. The difference between model-distinguishable pairs' embeddings is further magnified in the pairwise contrastive training process. However, the difference for model-indistinguishable pairs caused by numerical precision remains unaffected mainly.
The framework is shown in Fig~\ref{fig:eval}(a). We also conducted an ablation study comparing training and random weights, as presented in Appendix~\ref{app:training}.

\subsection{Evaluation Method}
We evaluate models by comparing the outputs of two non-isomorphic graphs. If we notice a significant difference between the outputs, we conclude that the GNN can distinguish the pair of graphs.
However, setting a suitable threshold can be challenging. A large threshold may yield false negatives, which means the model can distinguish the pair, but the observed difference falls short of the threshold. Conversely, a small threshold may yield false positives, which means the model cannot distinguish the pair, but fluctuating errors cause the difference to exceed the threshold.




To address the issue of fluctuating errors, we draw inspiration from Paired Comparisons~\citep{fisher1992statistical}. It involves comparing two groups of results instead of a single pair. The influence of random errors is mitigated by repeatedly generating results and comparing the two groups of results. Building upon it, we introduce a method called \textbf{R}eliable \textbf{P}aired \textbf{C}omparison (RPC) to verify whether a GNN genuinely produces distinct outputs for a pair of graphs. The pipeline is depicted in Fig~\ref{fig:eval}(b).

RPC consists of two main components: Major Procedure and Reliability Check. The Major Procedure is conducted on a pair of non-isomorphic graphs to measure their dissimilarity. In contrast, the Reliability Check is conducted on graph automorphisms to capture internal fluctuations.

\textbf{Major Procedure.}
Given two non-isomorphic graphs $\gG, \gH$, we create $q$ copies of each by random permutation (still isomorphic to original graph) to generate two groups of graphs, denoted as:
\begin{equation}
    \gG_{i}, ~\gH_{i}, ~i\in[q].
\end{equation} 
Supposing the GNN $f:\{\gG\} \rightarrow \R^{d}$, we first calculate $q$ differences utilizing Paired Comparisons.
\begin{equation}
    \vd_{i} =  f(\gG_{i}) -  f(\gH_{i}), ~i\in[q].
\end{equation}

\begin{assumption}
$\vd_{i}$ are independent $\mathcal{N}(\vmu, \mSigma)$ random vectors.
\label{ass:n_p}
\end{assumption}
The above assumption is based on a more basic assumption that $f(\gG_{i}), ~f(\gH_{i})$ follow Gaussian distributions, which presumes that random permutation only introduces Gaussian noise to the result.

If the GNN cannot distinguish $\gG$ and $\gH$, the mean difference should satisfy $\vmu = \mathbf{0}$. To check whether the equation holds, we can conduct an $\alpha$-level Hotelling's T-square test, comparing the hypotheses $H_{0} : \vmu = \mathbf{0}$ against $H_{1}: \vmu \neq \mathbf{0}$. The $T^{2}$-statistic for $\vmu$ is calculated as follows:

\begin{equation}
    T^{2} = q (\overline{\vd} - \vmu)^{T} \mS^{-1} (\overline{\vd} - \vmu),
\label{equ:t2}
\end{equation}
where
\begin{equation}
        \overline{\vd} = \frac{1}{q}\sum_{i=1}^{q} \vd_{i}, ~
        \mS = \frac{1}{q-1}\sum_{i=1}^{q}(\vd_{i} - \overline{\vd}) (\vd_{i} - \overline{\vd})^{T}.
\end{equation}

Hotelling's T-square test proves that $T^2$ is distributed as an $\frac{(q-1)d}{q-d}F_{d, q-d}$ random variable, where $F_{d, q-d}$ represents $F$-distribution with degree of freedom $d, q-d$~\citep{hotelling1992generalization}. The theorem establishes a connection between the unknown parameter $\vmu$ and a definite distribution $F_{d, q-d}$, allowing us to confirm the confidence interval of $\vmu$ by testing the distribution fit. 
To test the hypothesis $H_{0} : \vmu = \mathbf{0}$, we substitute $\vmu = \mathbf{0}$ into Equation~(\ref{equ:t2}), obtaining $T^{2}_{\text{test}} = q \overline{\vd}^T \mS^{-1} \overline{\vd}$. Then an $\alpha$-level test of $H_{0} : \vmu = \mathbf{0}$ versus $H_{1} : \vmu \neq \mathbf{0}$ 
accepts $H_{0}$ (the GNN cannot distinguish the pair) if:
\begin{equation}
    T^{2}_{\text{test}} = q \overline{\vd}^{T} \mS^{-1} \overline{\vd} < \frac{(q-1)d}{(q-d)}F_{d,q-d}(\alpha),
\end{equation}
where $F_{d,q-d}(\alpha)$ is the upper (100$\alpha$)th percentile of the $F$-distribution $F_{d,q-d}$~\citep{fisher1950contributions} with $d$ and $q-d$ degrees of freedom.
Similarly, we reject $H_{0}$ (the GNN can distinguish the pair) if
\begin{equation}
    T^{2}_{\text{test}} = q \overline{\vd}^{T} \mS^{-1} \overline{\vd} > \frac{(q-1)d}{(q-d)}F_{d,q-d}(\alpha).
\end{equation}

\textbf{Reliability Check.}
With an appropriate choice of $\alpha$, the Major Procedure provides a dependable confidence interval for assessing the distinguishability. However, manually selected $\alpha$ based on heuristics may not be optimal. Furthermore, computational precisions can introduce distribution shifts in assumed Gaussian fluctuations. To address this issue, we introduce the Reliability Check. It bridges external differences between two graphs and internal fluctuations within a single graph.

WLOG, we replace $\gH$ by permutation of $\gG$, i.e., $\gG^{\pi}$.
We can then obtain the internal fluctuations within $\gG$ by comparing it with $\gG^{\pi}$, and the external difference between $\gG$ and $\gH$ by comparing $\gG$ and $\gH$.
We utilize the same step as Major Procedure on $\gG$ and $\gG^{\pi}$, calculating the $T^2$-statistics as:
\begin{align}
  & T^2_{\text{reliability}} = ~q \overline{\vd}^{T} \mS^{-1} \overline{\vd},\\
    \text{where}~~ & \overline{\vd} = \frac{1}{q}\sum_{i=1}^{q} \vd_{i},
        \vd_{i} = f(\gG_{i}) - f(\gG_{i}^{\pi}), i \in [q], ~ \\
        & \mS = \frac{1}{q-1}\sum_{i=1}^{q}(\vd_{i} - \overline{\vd}) (\vd_{i} - \overline{\vd})^{T}.
\end{align}
Recalling that $\gG$ and $\gG^{\pi}$ are isomorphic, the GNN should not distinguish between them, implying that $\vmu = \mathbf{0}$. 
Therefore, the test is considered reliable only if $T^2_{\text{reliability}} < \frac{(q-1)d}{(q-d)}F_{d,q-d}(\alpha)$. Combining the reliability and distinguishability results, we get the complete RPC (Fig~\ref{fig:eval}(b)):

For each pair of graphs $\gG$ and $\gH$, we first calculate the threshold value, denoted as $\text{Threshold} = \frac{(q-1)d}{(q-d)}F_{d,q-d}(\alpha)$. Next, we conduct the Major Procedure on $\gG$ and $\gH$ for distinguishability and perform the Reliability Check on $\gG$ and $\gG^{\pi}$ for Reliability. Only when $T^2_{\text{test}}$ from the Major Procedure, and $T^2_{\text{reliability}}$ from the Reliability Check, satisfying $T^2_{\text{reliability}}<\text{Threshold}<T^2_{\text{test}}$, do we conclude that the GNN can distinguishing $\gG$ and $\gH$.

We further propose \textbf{R}eliable \textbf{A}daptive \textbf{P}aired \textbf{C}omparisons (RAPC), aiming to adaptively adjust the threshold and provide an upper bound for false positive rates. 
In practice, we use \textbf{RPC} due to its less computational time and satisfactory performance. For more details, please refer to Appendix~\ref{app:rapc}.

\section{Experiment}
In this section, we evaluate the expressiveness of 23 representative models using our BREC dataset. 


\textbf{Model selection.}
We evaluate six categories of methods: non-GNN methods, subgraph-based GNNs, $k$-WL-hierarchy-based GNNs, substructure-based GNNs, transformer-based GNNs, and random GNNs. We implement four types of non-GNN baselines based on \citet{papp2022theoretical, ying2021transformers}, including WL test (3-WL and SPD-WL), counting substructures ($S_3$ and $S_4$), k-hop neighbors ($N_1$ and $N_2$), and node marking ($M_1$). We implemented them by adding additional features during the WL test update or using heterogeneous message passing. Note that they are more theoretically significant than practical since they may require exhaustive enumeration or exact isomorphism encoding of various substructures. 
We additionally included 16 state-of-the-art GNNs, 
including NGNN~\citep{zhang2021nested}, DE+NGNN~\citep{li2020distance}, DS/DSS-GNN~\citep{bevilacqua2021equivariant}, SUN~\citep{frasca2022understanding}, SSWL\_P~\citep{zhang2023complete}, GNN-AK~\citep{zhao2022from}, KP-GNN~\citep{feng2022how}, I$^2$-GNN~\citep{huang2022boosting}, PPGN~\citep{maron2019provably}, $\delta$-k-LGNN~\citep{morris2020weisfeiler}, KC-SetGNN~\citep{zhao2022practical}, GSN~\citep{bouritsas2022improving}, DropGNN~\citep{papp2021dropgnn}, OSAN~\citep{qian2022ordered}, and Graphormer~\citep{ying2021transformers}.

\begin{table*}
\caption{Pair distinguishing accuracies on BREC}
\label{tab:experiment-1}
\begin{center}
\resizebox{1.0\textwidth}{!}{
\begin{tabular}{lccccccccccc}
\toprule
~ & ~ & \multicolumn{2}{c}{Basic Graphs (60)} & \multicolumn{2}{c}{Regular Graphs (140)} & \multicolumn{2}{c}{Extension Graphs (100)} & \multicolumn{2}{c}{CFI Graphs (100)} & \multicolumn{2}{c}{Total (400)}\\
\cmidrule(r{0.5em}){3-4} \cmidrule(l{0.5em}){5-6}\cmidrule(l{0.5em}){7-8}\cmidrule(l{0.5em}){9-10}\cmidrule(l{0.5em}){11-12}
Type & Model &  Number & Accuracy & Number & Accuracy  & Number & Accuracy  & Number & Accuracy  & Number & Accuracy  \\
\midrule
\multirow{7}{*}{Non-GNNs}
& 3-WL & 60 & 100\%  & 50 & 35.7\% & 100 & 100\%  & 60 & 60.0\%  & 270 & 67.5\% \\
& SPD-WL & 16 & 26.7\% & 14 & 11.7\% & 41 & 41\% & 12 & 12\% & 83 & 20.8\% \\
& $S_3$ & 52 & 86.7\% & 48 & 34.3\% & 5 & 5\% & 0 & 0\% & 105 & 26.2\%    \\
& $S_4$ & 60 & 100\% & 99 & 70.7\%  & 84 & 84\% & 0 & 0\% & 243 & 60.8\% \\
& $N_1$ & 60 & 100\% & 99 & 85\% & 93 & 93\% & 0 & 0\% & 252 & 63\% \\
& $N_2$ & 60 & 100\% & 138 & 98.6\% & 100 & 100\% & 0 & 0\% & 298 & 74.5\% \\
& $M_1$ & 60 & 100\% & 50 & 35.7\%  & 100 & 100\% & 41 & 41\% & 251 & 62.8\% \\
\midrule
\multirow{9}{*}{Subgraph GNNs}
& NGNN & 59 & 98.3\% & 48 & 34.3\% & 59 & 59\%  & 0 & 0\%  & 166 & 41.5\%  \\
& DE+NGNN & 60 & 100\%& 50 & 35.7\% & 100 & 100\%& 21 & 21\% & 231 & 57.8\%\\
& DS-GNN & 58 & 96.7\% & 48 & 34.3\%  & 100 & 100\%  & 16 & 16\%  & 222 & 55.5\% \\
& DSS-GNN & 58 & 96.7\% & 48 & 34.3\% & 100 & 100\% & 15 & 15\%  & 221 & 55.2\% \\
& SUN & 60 & 100\% & 50 & 35.7\% & 100 & 100\% & 13 & 13\% & 223 & 55.8\% \\
& SSWL\_P & 60 & 100\% & 50 & 35.7\% & 100 & 100\% & 38 & 38\% & 248 & 62\% \\
& GNN-AK & 60 & 100\% & 50 & 35.7\% & 97 & 97\%& 15 & 15\% & 222 & 55.5\%\\
& KP-GNN & 60 & 100\% & 106 & 75.7\%& 98 & 98\%& 11 & 11\% & 275 & 68.8\% \\ 
& I$^2$-GNN & 60 & 100\%& 100 & 71.4\% & 100 & 100\%& 21 & 21\% & 281 & 70.2\%\\
\midrule
\multirow{3}{*}{k-WL GNNs}
& PPGN & 60 & 100\% & 50 & 35.7\% & 100 & 100\% & 23 & 23\%  & 233 & 58.2\% \\
& $\delta$-k-LGNN & 60 & 100\% & 50 & 35.7\%& 100 & 100\%& 6 & 6\% & 216 & 54\% \\ 
& KC-SetGNN & 60 & 100\% & 50 & 35.7\%& 100 & 100\%& 1 & 1\% & 211 & 52.8\% \\
\midrule
\multirow{1}{*}{Substructure GNNs}
& GSN & 60 & 100\% & 99 & 70.7\%  & 95 & 95\% & 0 & 0\% & 254 & 63.5\% \\
\midrule
\multirow{2}{*}{Random GNNs}
& DropGNN & 52 & 86.7\% & 41 & 29.3\% & 82 & 82\% & 2 & 2\% & 177 & 44.2\% \\
& OSAN &  56 & 93.3\% & 8 & 5.7\% & 79 & 79\% & 5 & 5\% & 148 & 37\% \\
\midrule
\multirow{1}{*}{Transformer GNNs}
& Graphormer & 16 & 26.7\% & 12 & 8.6\% & 41 & 41\% & 10 & 10\% & 79 & 19.8\% \\
\bottomrule
\end{tabular}
}
\end{center}
\vspace{-0.2in}
\end{table*}

\textbf{Observation 1. The realized expressiveness closely conforms to the anticipated theoretical expectations in general. Moreover, it posits an empirical measurement for GNNs lacking precise theoretical expressiveness.}

Table~\ref{tab:experiment-1} showcases the principal findings. $N_2$ performs best among non-GNNs, and I$^2$-GNN attains the highest accuracy among GNNs. Subgraph-based GNNs generally exhibit outcomes comparable to theoretical comparative results. The results on k-WL hierarchy-based GNNs reveal distinctions between global and local methods. Regarding substructure GNNs and random GNNs, we initially provide empirical evaluation results on expressiveness. Detailed experimental specifications are included in Appendix~\ref{app:experiments}. We also firstly implemented the tight bound expressiveness of GNNs to provide a more precise understanding of the realized expressiveness compared to the theoretical expressiveness. The detailed results can be found in Appendix~\ref{app:tight}.


\textbf{Non-GNN baselines.}
3-WL successfully distinguishes all Basic graphs, Extension graphs, simple regular graphs and 60 CFI graphs as expected. 
$S_3$, $S_4$, $N_1$, and $N_2$ demonstrate excellent performance on small-radius graphs such as Basic, Regular, and Extension graphs. However, due to their limited receptive fields, they struggle to distinguish large-radius graphs like CFI graphs.
Noting that the expressiveness of $S_3$ and $S_4$ is bounded by $N_1$ and $N_2$, respectively, as analyzed by \citet{papp2022theoretical}.
Conversely, $M_1$ is implemented by heterogeneous message passing, which makes it unaffected by large graph diameters, thus maintaining its performance across different graphs.
SPD-WL is another 1-WL extension operated on a complete graph with shortest path distances as edge features. 
It may degrade to 1-WL on low-radius graphs, causing its relatively poor performance.

\textbf{Subgraph-based GNNs.}
Regarding subgraph-based models, they can generally distinguish almost all Basic graphs, simple regular graphs and Extension graphs. However, an exception lies with NGNN, which performs poorly in Extension graphs due to its simplicial node selection policy and lack of node labeling. Two other exceptions are KP-GNN and I$^2$-GNN, both exhibiting exceptional performance in Regular graphs. KP-GNN can differentiate a substantial number of strongly regular graphs and 4-vertex condition graphs by peripheral subgraphs, surpassing the 3-WL partially. And I$^2$-GNN surpasses the limitations of 3-WL partially through its enhanced cycle-counting power.

\textbf{Observation 2. The realized expressiveness does not consistently increase as the subgraph radius expands. Hence, optimizing the realized expressiveness necessitates identifying an optimal subgraph radius value.}

A salient factor affecting the performance lies in the subgraph radius. GNNs yield superior real expressiveness as the subgraph radius expands and is expected as well for realized expressiveness. Nonetheless, in practical implementation, the expansion of the radius may engender the dilution of information, wherein the receptive field expands to encompass some extraneous or noisy data. Consequently, we consider the subgraph radius as a hyperparameter, fine-tuning it for each model, and present the optimal hyperparameters in Table~\ref{tab:hyperparameters}.
Further details can be found in Appendix~\ref{app:subgraph}.

\textbf{Observation 3. Profound design and encoding elevate realized expressiveness and practical performance even without theoretical expressiveness improvement.}

The inclusion of distance encoding in DE+NGNN and I$^2$-GNN facilitates enhanced differentiation among distinct hops within a specified subgraph radius, particularly in larger subgraph radii. While certain node labeling techniques have been proven to possess equivalent expressiveness capabilities~\citep{zhang2023complete}, it is consistently observed that distance encoding effectively amplifies realized expressiveness and real-world performance.
Regarding DS-GNN, DSS-GNN, GNN-AK, SUN, and SSWL\_P, they employ analogous aggregation schemes with minor variations in their operations. These models showcase comparable performance, with SSWL\_P outperforming the rest, which aligns with the notion that SSWL\_P achieves the highest degree of expressiveness through the implementation of meticulously selected components~\citep{zhang2023complete}.

\textbf{$k$-WL hierarchy-based GNNs.}
For the $k$-WL-hierarchy-based models, we adopt two implemented approaches: high-order simulation and local-WL simulation. 
PPGN serves as the representative work for the former, while $\delta$-k-LGNN and KCSet-GNN as the latter.

\textbf{Observation 4. The realized expressiveness does not consistently increase as the number of layer increases. Hence, optimizing the realized expressiveness necessitates identifying optimal number of layers .}

For GNNs emulating k-WL, the number of layers becomes a pivotal factor as each layer represents one WL iteration. Increasing the number of layers results in a progressive enhancement of expressiveness; nevertheless, it may encounter over-smoothing issues. Consequently, a diligent exploration of the optimal hyperparameters is conducted as in \ref{app:kwl}.
PPGN perfectly aligns its performance with 3-WL across all graphs except for CFI graphs with large radii. Nonetheless, PPGN still surpasses most GNNs in CFI graphs due to global $k$-WL's global receptive field.
For $\delta$-k-LGNN, we set $k=2$, while for KCSet-GNN, we set $k=3,c=2$ to simulate local 3-WL, adhering to the original configuration. By comparing the output results with relatively small diameters, we observed that local WL matches the performance of general $k$-WL. However, local WL exhibits lower performance for CFI graphs due to insufficient receptive fields.

\textbf{Substructure-based GNNs}
For substructure-based GNNs, we select GSN, which incorporates substructure isomorphism counting as features. The best result obtained for GSN-e is reported when setting $k=4$. For further exploration of policy and size, please refer to Appendix~\ref{app:substructure}.

\textbf{Random GNNs}
Random GNNs are unsuitable for GI problems since even identical graphs can yield different outcomes due to inherent randomness. However, the RPC can quantify fluctuations in the randomization process, thereby enabling testing for random GNNs. We test DropGNN and OSAN. For more details regarding the crucial factor of random samples, please refer to Appendix~\ref{app:random}.

\textbf{Transformer-based GNNs}
For transformer-based GNNs, we select Graphormer, which is anticipated to possess expressiveness with SPD-WL. The results also verify that.

\section{Conclusion and Future Work}
\label{sec:conclusion}
This paper undertakes a thorough empirical re-evaluation of GNN expressiveness, presenting the most comprehensive findings thus far. To address previous limitations on measurement, this study introduces a novel dataset called BREC, along with a test methodology named RPC. The experiment reveals a noticeable disparity between theoretical expectations and realized expressiveness. The algorithms implemented for the first time also provide valuable tools.

In addition to the expressiveness comparison primarily based on graph isomorphism (GI), there exist several alternative metrics for expressiveness evaluation, such as substructure countings. However, they are often conducted on datasets not specifically designed for expressiveness~\citep{huang2022boosting,zhao2022from,chen2020can}, which can lead to biased results caused by spurious correlations. Certain methods may encounter difficulties in identifying a specific substructure, but they might successfully capture another property that is correlated with substructures, thereby yielding false indications of high performance.
This problem can be alleviated by BREC with difficult graphs. We reveal the generation process of BREC in Appendix~\ref{app:generation} to be utilized in more tasks. We also hope realized expressiveness will aid researchers exploring in other domains.

\section*{Acknowledgements}

This work is partially supported by the National Key R\&D Program of China (2022ZD0160300), the National Key R\&D Program of China (2021ZD0114702), the National Natural Science Foundation of China (62276003), and Alibaba
Innovative Research Program.

\section*{Impact Statement}

This paper presents research aimed at enhancing the expressiveness of Graph Neural Networks (GNNs). While GNNs have the potential for wide-ranging applications with significant societal implications, this study specifically focuses on investigating and improving expressiveness, without delving into any specific real-world dataset applications. As a result, we do not emphasize any specific social impacts in this work.


\bibliography{example_paper}

\begin{thebibliography}{49}
\providecommand{\natexlab}[1]{#1}
\providecommand{\url}[1]{\texttt{#1}}
\expandafter\ifx\csname urlstyle\endcsname\relax
  \providecommand{\doi}[1]{doi: #1}\else
  \providecommand{\doi}{doi: \begingroup \urlstyle{rm}\Url}\fi

\bibitem[Abboud et~al.(2021)Abboud, Ceylan, Grohe, and Lukasiewicz]{abboud2020surprising}
Abboud, R., Ceylan, {\.I}.~{\.I}., Grohe, M., and Lukasiewicz, T.
\newblock The surprising power of graph neural networks with random node initialization.
\newblock In Zhou, Z. (ed.), \emph{Proceedings of the Thirtieth International Joint Conference on Artificial Intelligence, {IJCAI} 2021, Virtual Event / Montreal, Canada, 19-27 August 2021}, pp.\  2112--2118. ijcai.org, 2021.
\newblock \doi{10.24963/ijcai.2021/291}.
\newblock URL \url{https://doi.org/10.24963/ijcai.2021/291}.

\bibitem[Babai \& Kucera(1979)Babai and Kucera]{babai1979canonical}
Babai, L. and Kucera, L.
\newblock Canonical labelling of graphs in linear average time.
\newblock In \emph{20th Annual Symposium on Foundations of Computer Science (sfcs 1979)}, pp.\  39--46. IEEE, 1979.

\bibitem[Balcilar et~al.(2021)Balcilar, H{\'e}roux, Gauzere, Vasseur, Adam, and Honeine]{balcilar2021breaking}
Balcilar, M., H{\'e}roux, P., Gauzere, B., Vasseur, P., Adam, S., and Honeine, P.
\newblock Breaking the limits of message passing graph neural networks.
\newblock In \emph{International Conference on Machine Learning}, pp.\  599--608. PMLR, 2021.

\bibitem[Barab{\'a}si et~al.(2011)Barab{\'a}si, Gulbahce, and Loscalzo]{barabasi2011network}
Barab{\'a}si, A.-L., Gulbahce, N., and Loscalzo, J.
\newblock Network medicine: a network-based approach to human disease.
\newblock \emph{Nature reviews genetics}, 12\penalty0 (1):\penalty0 56--68, 2011.

\bibitem[Berg et~al.(2017)Berg, Kipf, and Welling]{berg2017graph}
Berg, R. v.~d., Kipf, T.~N., and Welling, M.
\newblock Graph convolutional matrix completion.
\newblock \emph{arXiv preprint arXiv:1706.02263}, 2017.

\bibitem[Bevilacqua et~al.(2022)Bevilacqua, Frasca, Lim, Srinivasan, Cai, Balamurugan, Bronstein, and Maron]{bevilacqua2021equivariant}
Bevilacqua, B., Frasca, F., Lim, D., Srinivasan, B., Cai, C., Balamurugan, G., Bronstein, M.~M., and Maron, H.
\newblock Equivariant subgraph aggregation networks.
\newblock In \emph{The Tenth International Conference on Learning Representations, {ICLR} 2022, Virtual Event, April 25-29, 2022}. OpenReview.net, 2022.
\newblock URL \url{https://openreview.net/forum?id=dFbKQaRk15w}.

\bibitem[Bouritsas et~al.(2022)Bouritsas, Frasca, Zafeiriou, and Bronstein]{bouritsas2022improving}
Bouritsas, G., Frasca, F., Zafeiriou, S., and Bronstein, M.~M.
\newblock Improving graph neural network expressivity via subgraph isomorphism counting.
\newblock \emph{IEEE Transactions on Pattern Analysis and Machine Intelligence}, 45\penalty0 (1):\penalty0 657--668, 2022.

\bibitem[Brouwer et~al.(2012)Brouwer, Haemers, Brouwer, and Haemers]{brouwer2012strongly}
Brouwer, A.~E., Haemers, W.~H., Brouwer, A.~E., and Haemers, W.~H.
\newblock Strongly regular graphs.
\newblock \emph{Spectra of graphs}, pp.\  115--149, 2012.

\bibitem[Brouwer et~al.(2023)Brouwer, Ihringer, and Kantor]{brouwer2021strongly}
Brouwer, A.~E., Ihringer, F., and Kantor, W.~M.
\newblock Strongly regular graphs satisfying the 4-vertex condition.
\newblock \emph{Combinatorica}, 43\penalty0 (2):\penalty0 257--276, apr 2023.
\newblock \doi{10.1007/s00493-023-00005-y}.
\newblock URL \url{https://doi.org/10.1007%2Fs00493-023-00005-y}.

\bibitem[Cai et~al.(1989)Cai, Furer, and Immerman]{63543}
Cai, J.-Y., Furer, M., and Immerman, N.
\newblock An optimal lower bound on the number of variables for graph identification.
\newblock In \emph{30th Annual Symposium on Foundations of Computer Science}, pp.\  612--617, 1989.
\newblock \doi{10.1109/SFCS.1989.63543}.

\bibitem[Chen et~al.(2019)Chen, Villar, Chen, and Bruna]{chen2019equivalence}
Chen, Z., Villar, S., Chen, L., and Bruna, J.
\newblock On the equivalence between graph isomorphism testing and function approximation with gnns.
\newblock \emph{Advances in neural information processing systems}, 32, 2019.

\bibitem[Chen et~al.(2020)Chen, Chen, Villar, and Bruna]{chen2020can}
Chen, Z., Chen, L., Villar, S., and Bruna, J.
\newblock Can graph neural networks count substructures?
\newblock \emph{Advances in neural information processing systems}, 33:\penalty0 10383--10395, 2020.

\bibitem[Cotta et~al.(2021)Cotta, Morris, and Ribeiro]{cotta2021reconstruction}
Cotta, L., Morris, C., and Ribeiro, B.
\newblock Reconstruction for powerful graph representations.
\newblock \emph{Advances in Neural Information Processing Systems}, 34:\penalty0 1713--1726, 2021.

\bibitem[Duvenaud et~al.(2015)Duvenaud, Maclaurin, Iparraguirre, Bombarell, Hirzel, Aspuru-Guzik, and Adams]{duvenaud2015convolutional}
Duvenaud, D.~K., Maclaurin, D., Iparraguirre, J., Bombarell, R., Hirzel, T., Aspuru-Guzik, A., and Adams, R.~P.
\newblock Convolutional networks on graphs for learning molecular fingerprints.
\newblock \emph{Advances in neural information processing systems}, 28, 2015.

\bibitem[Fan et~al.(2019)Fan, Ma, Li, He, Zhao, Tang, and Yin]{fan2019graph}
Fan, W., Ma, Y., Li, Q., He, Y., Zhao, E., Tang, J., and Yin, D.
\newblock Graph neural networks for social recommendation.
\newblock In \emph{The world wide web conference}, pp.\  417--426, 2019.

\bibitem[Feng et~al.(2022)Feng, Chen, Li, Sarkar, and Zhang]{feng2022how}
Feng, J., Chen, Y., Li, F., Sarkar, A., and Zhang, M.
\newblock How powerful are k-hop message passing graph neural networks.
\newblock In Oh, A.~H., Agarwal, A., Belgrave, D., and Cho, K. (eds.), \emph{Advances in Neural Information Processing Systems}, 2022.
\newblock URL \url{https://openreview.net/forum?id=nN3aVRQsxGd}.

\bibitem[Fisher(1950)]{fisher1950contributions}
Fisher, R.~A.
\newblock Contributions to mathematical statistics.
\newblock 1950.

\bibitem[Fisher(1992)]{fisher1992statistical}
Fisher, R.~A.
\newblock \emph{Statistical methods for research workers}.
\newblock Springer, 1992.

\bibitem[Frasca et~al.(2022)Frasca, Bevilacqua, Bronstein, and Maron]{frasca2022understanding}
Frasca, F., Bevilacqua, B., Bronstein, M., and Maron, H.
\newblock Understanding and extending subgraph gnns by rethinking their symmetries.
\newblock \emph{Advances in Neural Information Processing Systems}, 35:\penalty0 31376--31390, 2022.

\bibitem[Geerts \& Reutter(2022)Geerts and Reutter]{geerts2022expressiveness}
Geerts, F. and Reutter, J.~L.
\newblock Expressiveness and approximation properties of graph neural networks.
\newblock In \emph{The Tenth International Conference on Learning Representations, {ICLR} 2022, Virtual Event, April 25-29, 2022}. OpenReview.net, 2022.
\newblock URL \url{https://openreview.net/forum?id=wIzUeM3TAU}.

\bibitem[Hadsell et~al.(2006)Hadsell, Chopra, and LeCun]{hadsell2006dimensionality}
Hadsell, R., Chopra, S., and LeCun, Y.
\newblock Dimensionality reduction by learning an invariant mapping.
\newblock In \emph{2006 IEEE Computer Society Conference on Computer Vision and Pattern Recognition (CVPR'06)}, volume~2, pp.\  1735--1742. IEEE, 2006.

\bibitem[Hotelling(1992)]{hotelling1992generalization}
Hotelling, H.
\newblock The generalization of student’s ratio.
\newblock In \emph{Breakthroughs in statistics: Foundations and basic theory}, pp.\  54--65. Springer, 1992.

\bibitem[Huang \& Villar(2021)Huang and Villar]{huang2021short}
Huang, N.~T. and Villar, S.
\newblock A short tutorial on the weisfeiler-lehman test and its variants.
\newblock In \emph{ICASSP 2021-2021 IEEE International Conference on Acoustics, Speech and Signal Processing (ICASSP)}, pp.\  8533--8537. IEEE, 2021.

\bibitem[Huang et~al.(2023)Huang, Peng, Ma, and Zhang]{huang2022boosting}
Huang, Y., Peng, X., Ma, J., and Zhang, M.
\newblock Boosting the cycle counting power of graph neural networks with i{\textdollar}{\^{}}2{\textdollar}-gnns.
\newblock In \emph{The Eleventh International Conference on Learning Representations, {ICLR} 2023, Kigali, Rwanda, May 1-5, 2023}. OpenReview.net, 2023.
\newblock URL \url{https://openreview.net/pdf?id=kDSmxOspsXQ}.

\bibitem[Johnson \& Wichern(2007)Johnson and Wichern]{johnson_applied_2007}
Johnson, R.~A. and Wichern, D.~W.
\newblock \emph{Applied multivariate statistical analysis}.
\newblock Pearson Prentice Hall, Upper Saddle River, N.J, 6th ed edition, 2007.
\newblock ISBN 978-0-13-187715-3.
\newblock OCLC: ocm70867129.

\bibitem[Koch et~al.(2015)Koch, Zemel, Salakhutdinov, et~al.]{koch2015siamese}
Koch, G., Zemel, R., Salakhutdinov, R., et~al.
\newblock Siamese neural networks for one-shot image recognition.
\newblock In \emph{ICML deep learning workshop}, volume~2. Lille, 2015.

\bibitem[Li et~al.(2020)Li, Wang, Wang, and Leskovec]{li2020distance}
Li, P., Wang, Y., Wang, H., and Leskovec, J.
\newblock Distance encoding: Design provably more powerful neural networks for graph representation learning.
\newblock \emph{Advances in Neural Information Processing Systems}, 33:\penalty0 4465--4478, 2020.

\bibitem[Maron et~al.(2019{\natexlab{a}})Maron, Ben-Hamu, Serviansky, and Lipman]{maron2019provably}
Maron, H., Ben-Hamu, H., Serviansky, H., and Lipman, Y.
\newblock Provably powerful graph networks.
\newblock \emph{Advances in neural information processing systems}, 32, 2019{\natexlab{a}}.

\bibitem[Maron et~al.(2019{\natexlab{b}})Maron, Ben{-}Hamu, Shamir, and Lipman]{maron2018invariant}
Maron, H., Ben{-}Hamu, H., Shamir, N., and Lipman, Y.
\newblock Invariant and equivariant graph networks.
\newblock In \emph{7th International Conference on Learning Representations, {ICLR} 2019, New Orleans, LA, USA, May 6-9, 2019}. OpenReview.net, 2019{\natexlab{b}}.
\newblock URL \url{https://openreview.net/forum?id=Syx72jC9tm}.

\bibitem[Morris et~al.(2019)Morris, Ritzert, Fey, Hamilton, Lenssen, Rattan, and Grohe]{morris_weisfeiler_2021}
Morris, C., Ritzert, M., Fey, M., Hamilton, W.~L., Lenssen, J.~E., Rattan, G., and Grohe, M.
\newblock Weisfeiler and leman go neural: Higher-order graph neural networks.
\newblock In \emph{Proceedings of the AAAI conference on artificial intelligence}, volume~33, pp.\  4602--4609, 2019.

\bibitem[Morris et~al.(2020)Morris, Rattan, and Mutzel]{morris2020weisfeiler}
Morris, C., Rattan, G., and Mutzel, P.
\newblock Weisfeiler and leman go sparse: Towards scalable higher-order graph embeddings.
\newblock \emph{Advances in Neural Information Processing Systems}, 33:\penalty0 21824--21840, 2020.

\bibitem[Murphy et~al.(2019)Murphy, Srinivasan, Rao, and Ribeiro]{murphy2019relational}
Murphy, R., Srinivasan, B., Rao, V., and Ribeiro, B.
\newblock Relational pooling for graph representations.
\newblock In \emph{International Conference on Machine Learning}, pp.\  4663--4673. PMLR, 2019.

\bibitem[Papp \& Wattenhofer(2022)Papp and Wattenhofer]{papp2022theoretical}
Papp, P.~A. and Wattenhofer, R.
\newblock A theoretical comparison of graph neural network extensions.
\newblock In \emph{International Conference on Machine Learning}, pp.\  17323--17345. PMLR, 2022.

\bibitem[Papp et~al.(2021)Papp, Martinkus, Faber, and Wattenhofer]{papp2021dropgnn}
Papp, P.~A., Martinkus, K., Faber, L., and Wattenhofer, R.
\newblock Dropgnn: Random dropouts increase the expressiveness of graph neural networks.
\newblock \emph{Advances in Neural Information Processing Systems}, 34:\penalty0 21997--22009, 2021.

\bibitem[Qian et~al.(2022)Qian, Rattan, Geerts, Niepert, and Morris]{qian2022ordered}
Qian, C., Rattan, G., Geerts, F., Niepert, M., and Morris, C.
\newblock Ordered subgraph aggregation networks.
\newblock \emph{Advances in Neural Information Processing Systems}, 35:\penalty0 21030--21045, 2022.

\bibitem[Sato(2020)]{sato2020survey}
Sato, R.
\newblock A survey on the expressive power of graph neural networks.
\newblock \emph{arXiv preprint arXiv:2003.04078}, 2020.

\bibitem[Wang et~al.(2018{\natexlab{a}})Wang, Wang, Zhou, Ji, Gong, Zhou, Li, and Liu]{wang2018cosface}
Wang, H., Wang, Y., Zhou, Z., Ji, X., Gong, D., Zhou, J., Li, Z., and Liu, W.
\newblock Cosface: Large margin cosine loss for deep face recognition.
\newblock In \emph{Proceedings of the IEEE conference on computer vision and pattern recognition}, pp.\  5265--5274, 2018{\natexlab{a}}.

\bibitem[Wang et~al.(2018{\natexlab{b}})Wang, Zhang, Wang, Zhao, Li, Xie, and Guo]{wang2018ripplenet}
Wang, H., Zhang, F., Wang, J., Zhao, M., Li, W., Xie, X., and Guo, M.
\newblock Ripplenet: Propagating user preferences on the knowledge graph for recommender systems.
\newblock In \emph{Proceedings of the 27th ACM international conference on information and knowledge management}, pp.\  417--426, 2018{\natexlab{b}}.

\bibitem[Weisfeiler \& Leman(1968)Weisfeiler and Leman]{weisfeiler1968reduction}
Weisfeiler, B. and Leman, A.
\newblock The reduction of a graph to canonical form and the algebra which appears therein.
\newblock \emph{nti, Series}, 2\penalty0 (9):\penalty0 12--16, 1968.

\bibitem[Xu et~al.(2019)Xu, Hu, Leskovec, and Jegelka]{xu2018powerful}
Xu, K., Hu, W., Leskovec, J., and Jegelka, S.
\newblock How powerful are graph neural networks?
\newblock In \emph{7th International Conference on Learning Representations, {ICLR} 2019, New Orleans, LA, USA, May 6-9, 2019, Conference Track Proceedings}. OpenReview.net, 2019.
\newblock URL \url{https://openreview.net/forum?id=ryGs6iA5Km}.

\bibitem[Ying et~al.(2021)Ying, Cai, Luo, Zheng, Ke, He, Shen, and Liu]{ying2021transformers}
Ying, C., Cai, T., Luo, S., Zheng, S., Ke, G., He, D., Shen, Y., and Liu, T.-Y.
\newblock Do transformers really perform badly for graph representation?
\newblock \emph{Advances in Neural Information Processing Systems}, 34:\penalty0 28877--28888, 2021.

\bibitem[You et~al.(2021)You, Gomes-Selman, Ying, and Leskovec]{you2021identity}
You, J., Gomes-Selman, J.~M., Ying, R., and Leskovec, J.
\newblock Identity-aware graph neural networks.
\newblock In \emph{Proceedings of the AAAI conference on artificial intelligence}, volume~35, pp.\  10737--10745, 2021.

\bibitem[Zhang et~al.(2023{\natexlab{a}})Zhang, Feng, Du, He, and Wang]{zhang2023complete}
Zhang, B., Feng, G., Du, Y., He, D., and Wang, L.
\newblock A complete expressiveness hierarchy for subgraph {GNN}s via subgraph weisfeiler-lehman tests.
\newblock In Krause, A., Brunskill, E., Cho, K., Engelhardt, B., Sabato, S., and Scarlett, J. (eds.), \emph{Proceedings of the 40th International Conference on Machine Learning}, volume 202 of \emph{Proceedings of Machine Learning Research}, pp.\  41019--41077. PMLR, 23--29 Jul 2023{\natexlab{a}}.
\newblock URL \url{https://proceedings.mlr.press/v202/zhang23k.html}.

\bibitem[Zhang et~al.(2023{\natexlab{b}})Zhang, Luo, Wang, and He]{zhangrethinking}
Zhang, B., Luo, S., Wang, L., and He, D.
\newblock Rethinking the expressive power of gnns via graph biconnectivity.
\newblock In \emph{The Eleventh International Conference on Learning Representations}, 2023{\natexlab{b}}.

\bibitem[Zhang et~al.(2024)Zhang, Gai, Du, Ye, He, and Wang]{zhang2024weisfeilerlehman}
Zhang, B., Gai, J., Du, Y., Ye, Q., He, D., and Wang, L.
\newblock Beyond weisfeiler-lehman: A quantitative framework for gnn expressiveness, 2024.

\bibitem[Zhang \& Li(2021)Zhang and Li]{zhang2021nested}
Zhang, M. and Li, P.
\newblock Nested graph neural networks.
\newblock \emph{Advances in Neural Information Processing Systems}, 34:\penalty0 15734--15747, 2021.

\bibitem[Zhao et~al.(2022{\natexlab{a}})Zhao, Jin, Akoglu, and Shah]{zhao2022from}
Zhao, L., Jin, W., Akoglu, L., and Shah, N.
\newblock From stars to subgraphs: Uplifting any {GNN} with local structure awareness.
\newblock In \emph{International Conference on Learning Representations}, 2022{\natexlab{a}}.
\newblock URL \url{https://openreview.net/forum?id=Mspk_WYKoEH}.

\bibitem[Zhao et~al.(2022{\natexlab{b}})Zhao, Shah, and Akoglu]{zhao2022practical}
Zhao, L., Shah, N., and Akoglu, L.
\newblock A practical, progressively-expressive gnn.
\newblock \emph{Advances in Neural Information Processing Systems}, 35:\penalty0 34106--34120, 2022{\natexlab{b}}.

\bibitem[Zhou et~al.(2020)Zhou, Cui, Hu, Zhang, Yang, Liu, Wang, Li, and Sun]{zhou2020graph}
Zhou, J., Cui, G., Hu, S., Zhang, Z., Yang, C., Liu, Z., Wang, L., Li, C., and Sun, M.
\newblock Graph neural networks: A review of methods and applications.
\newblock \emph{AI open}, 1:\penalty0 57--81, 2020.

\end{thebibliography}
\bibliographystyle{icml2024}

\newpage
\appendix
\onecolumn

\section{Details on Regular Graphs}
\label{sec:regular_relationship}
In this section, we introduce the relationship between four types of regular graphs. The inclusion relations of them are shown in Figure~\ref{fig:relationship}, but their difficulty relations and inclusion relations are not consistent.

\begin{figure}[ht]
\begin{center}
\includegraphics[width=0.5\textwidth]{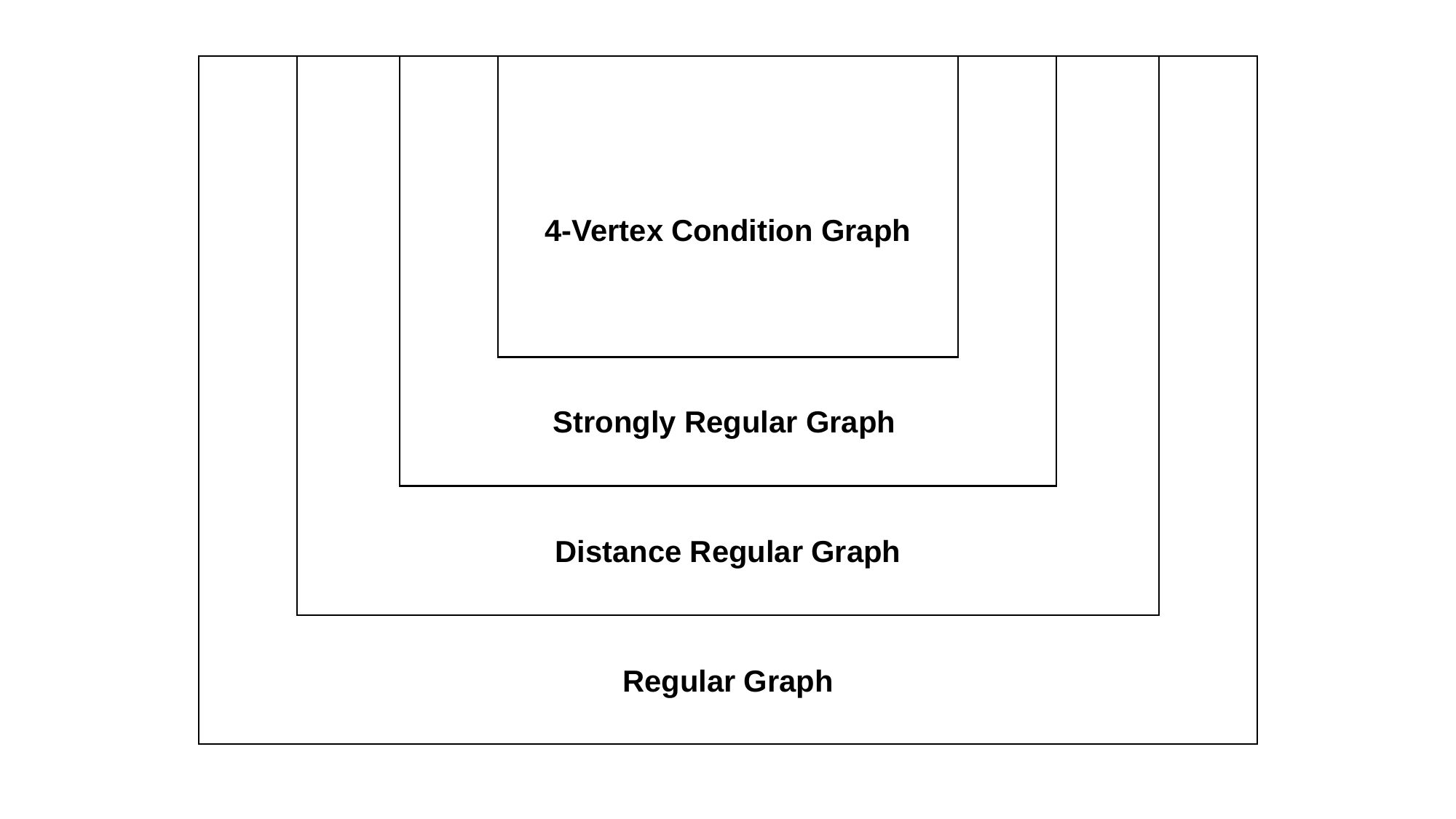}
\vspace{-0.1in}
\caption{Regular graphs relationship}
\label{fig:relationship}
\vspace{-0.2in}
\end{center}
\end{figure}

A graph is deemed a regular graph when all of its vertices possess an identical degree. If a regular graph, with $v$ vertices and degree $k$, satisfies the additional conditions wherein any two adjacent vertices share $\lambda$ common neighbors, and any two non-adjacent vertices share $\mu$ common neighbors, it is categorized as a strongly regular graph. Hence, it can be represented as srg$(v, k, \lambda, \mu)$, denoting its four associated parameters. It is worth mentioning that the diameter of a connected strongly regular graph is always 2~\citep{brouwer2012strongly}.

Regular graphs and strongly regular graphs find wide application in expressiveness analysis. The difficulty of strongly regular graphs surpasses that of general regular graphs due to the imposition of additional requirements. Notably, the simplest strongly regular graphs with identical parameters (srg$(16,6,2,2)$) are exemplified by the Shrikhande graph and the $4\times 4$-Rook's graph, as depicted in Figure~\ref{fig:BREC}(c).

Both 4-vertex condition graphs and distance regular graphs introduce heightened complexities, albeit in opposing directions. A 4-vertex condition graph is a strongly regular graph with an additional property that mandates the determination of the number of edges between the common neighbors of two vertices based on their connectivity~\citep{brouwer2021strongly}. Conversely, distance regular graphs expand upon the definition of strongly regular graphs by specifying that for any two vertices $v$ and $w$, the count of vertices at a distance $j$ from $v$ and at a distance $k$ from $w$ relies solely on $j$, $k$, and the distance between $v$ and $w$. Notably, a distance regular graph with a radius of $2$ is equivalent to a strongly regular graph.

The 4-vertex condition graph has yet to be explored in previous research endeavors. Similarly, instances of distance regular graphs are relatively scarce and analyzing them through examples proves to be challenging. To encourage further research in these domains, we have incorporated them into BREC.

\section{Node Features}
\label{app:node_features}

In this section, we present the concept of node features and edge features in graphs.

We commence by providing the definition of graphs using an adjacency matrix representation. Consider a graph where the node features are represented by a $d_n$-dimensional vector, and the edge features are represented by a $d_e$-dimensional vector. This graph can be denoted as $\gG = (\tV(\gG), \tE(\gG))$, where $\tV(\gG) \in \R^{n \times d_n}$ represents the node features, and $\tE(\gG) \in \R^{n \times n \times (d_e+1)}$ represents the edge features, with $n$ being the number of nodes in the graph. The adjacency matrix of the graph is denoted as $\tA(\gG) \in \R^{n \times n} = \etE(\gG)_{:,:,(d_e+1)}$, where $\tA(\gG)_{i, j} = 1$ if $(i, j) \in \E(\gG)$ (i.e., if nodes $i$ and $j$ are connected by an edge), otherwise $\etA(\gG)_{i, j} = 0$. The feature of node $i$ is represented by $\etV(\gG)_{i, :}$, and the feature of edge $(i, j)$ is represented by $\etE(\gG)_{i, j, 1:d_e}$. The permutation (or reindexing) of $\gG$ is denoted as $\gG^{\pi} = (\tV(\gG), \tE(\gG))$ with permutation $\pi : [n] \rightarrow [n]$, such that $\etV(\gG)_{i,:} = \etV(\gG)_{\pi(i),:}$ and $\etE(\gG)_{i,j,:} = \etE(\gG)_{\pi(i),\pi(j),:}$.

Next, we explore the utilization of features. It is evident that incorporating node features during initialization and edge features during message passing can enhance the performance of GNNs, given appropriate hyperparameters and training. However, we should consider whether features can truly represent graph structures or provide additional expressiveness. Let us categorize features into two types.

The first type involves fully utilizing the original features, such as distances to other nodes or spectral embeddings. While using these features can aid GNNs in solving Graph Isomorphism (GI) problems, this type of feature requires a dedicated design to effectively utilize them.
For instance, if we aim to recognize a 6-cycle in a graph, we can manually identify the cycle and assign distinct features to each node within the cycle. In this way, the GNN can recognize the cycle by aggregating the six distinctive features. However, the injecting strategy influences expressiveness and requires further analysis. Utilizing distance can also enhance expressiveness but also need a suitable design (like subgraph distance encoding and SPD-WL). 

The second type entails incorporating additional features, such as manually selected node identifiers. it is important to note that this improvement stems from reduced difficulty rather than increased expressiveness. For instance, given a pair of non-isomorphic graphs with high similarity, we can manually find the components causing the distinguishing difficulty and assign identifiers to help models overcome them. However, this process is generally unavailable in practice.


In summary, we can conclude that features have the potential to introduce expressiveness, but this should be accomplished through model design rather than relying solely on the dataset. In the case of BREC, a dataset created specifically for testing expressiveness, we do not include additional meaningful features. Instead, we employ the same vector for all node features and edge features and adhere to specific model settings to incorporate graph-specific features, such as the distance between nodes in distance encoding based models.


\section{WL Algorithm}
\label{app:wl}
This section briefly introduces the WL algorithm and two high-order variants.

The 1-WL algorithm, short for "1-Weisfeiler-Lehman," is an initial version of the WL algorithm. It serves as a graph isomorphism algorithm and can be employed to generate a distinctive label for each graph.

In the 1-WL algorithm, every node in the graph maintains a state or color, which undergoes refinement during each iteration by incorporating information from the states of its neighboring nodes. As the algorithm progresses, the graph representation evolves into a multiset of node states, ultimately converging to a final representation.

To circumvent these examples, researchers have devised a technique to augment each node in the 1-WL test, resulting in the development of the $k$-WL test \citep{babai1979canonical}. The $k$-dimensional Weisfeiler-Lehman test expands the scope of the test to consider colorings of k-tuples of nodes instead of individual nodes. This extension allows for a more comprehensive analysis of graph structures and assists in overcoming the limitations posed by certain examples.

In addition to the $k$-WL test, \citet{63543} proposed an alternative WL test algorithm that also extends to $k$-tuples. This variant is commonly referred to as the $k$-FWL ($k$-folklore-WL) test. The $k$-FWL test differs from the $k$-WL test in terms of how neighbors are defined and the order in which aggregation is performed on tuples and multisets.


There are three notable results associated with these tests:
\begin{itemize}
    \item [1] 1-WL = 2-WL
    \item [2] $k$-WL $>$ $(k-1)$-WL, $(k>2)$
    \item [3] $(k-1)$-FWL = $k$-WL
\end{itemize}

More details can be found in \citet{sato2020survey,huang2021short}.

\section{Circulant Skip Links (CSL) Graphs}
\label{app:csl}
A CSL graph is defined as follows: Let $r$ and $m$ be co-prime natural numbers with $r < m-1$. $\gG(m, r) = (\sV,\E)$ is an undirected 4-regular graph with $\sV=[m]$, where the edges form a cycle and include skip links. Specifically, for the cycle, $(j, j+1) \in \E$ for $j \in [m-1]$, and $(m, 1) \in \E$. For the skip links, the sequence is recursively defined as $s_1=1$, $s_{i+1}=(s_i+r)~\text{mod}~m + 1$, and $(s_i, s_{i+1}) \in \E$ for any $i \in \mathbb{N}$. Two Sample graphs with $m=10, r=2~or~3$ are shown in Fig~\ref{fig:previous}(b).

In the CSL dataset, 10 CSL graphs with $m=41$ and $r={2,3,4,5,6,9,11,12,13,16}$ are generated. Thus resulting in 10 non-isomorphic but 1-WL-indistinguishable graphs.

\section{GNN Extensions}
\label{app:extension}
In this section, we revisit definitions of the four GNN extensions as proposed by \citet{papp2022theoretical}.

\textbf{$k$-WL hierarchy ($k$-WL)}. The same as $k$-dimensional Weisfeiler-Leman algorithm.

\textbf{Substructure-counting ($S_k$)}. Let us define an $S_k$ GNN as follows: we assume that there is a preprocessing phase where for each $k' \in \{1, 2, ..., k\}$, we consider every different connected graph $\gG$ on $k'$ nodes (up to isomorphism), and we count the number of times this graph $\gG'$ appears as an induced subgraph such that $u$ is one of the nodes of $\gG'$. We add these numbers as new features to each node $u$ in the graph, and then we run a standard GNN on the graph with these extended features.

\textbf{$k$-hop-subgraph ($N_k$)}. We assume that there is a mapping from all possible induced $k$-hop neighborhoods (that is, all graphs of radius at most $k$ up to isomorphism) to the real numbers, and each node $u$ is equipped with this number as an extra feature in a preprocessing step. This is then followed by regular message passing (i.e. a standard GNN) for enough rounds.

\textbf{node-marking ($M_k$)}. We define $M_k$ as the GNN which combines a standard GNN run over all distinct $k'$-markings of $u$, for every $k' \in \{0, 1, ..., k\}$. That is, $M_k$ considers every version of the $d$-hop neighborhood around $u$ obtained by marking at most $k$ distinct nodes, computes an embedding for $u$ with the same GNN in each case, and then combines these into a final embedding for $u$.

Extension graphs are generated by comparing various GNN extensions with specific examples. For a detailed description of the generation process, please refer to Appendix~\ref{app:generation}.

\section{BREC Statistics}
\label{app:statistics}
Here we give some statistics of the BREC dataset, shown in Figure~\ref{fig:statistics}.

\begin{figure}
  \centering
    \subfigure[\#Nodes Distribution]{   
        \begin{minipage}[t]{0.31\textwidth}
        \centering    
        \includegraphics[width=1.0\linewidth]{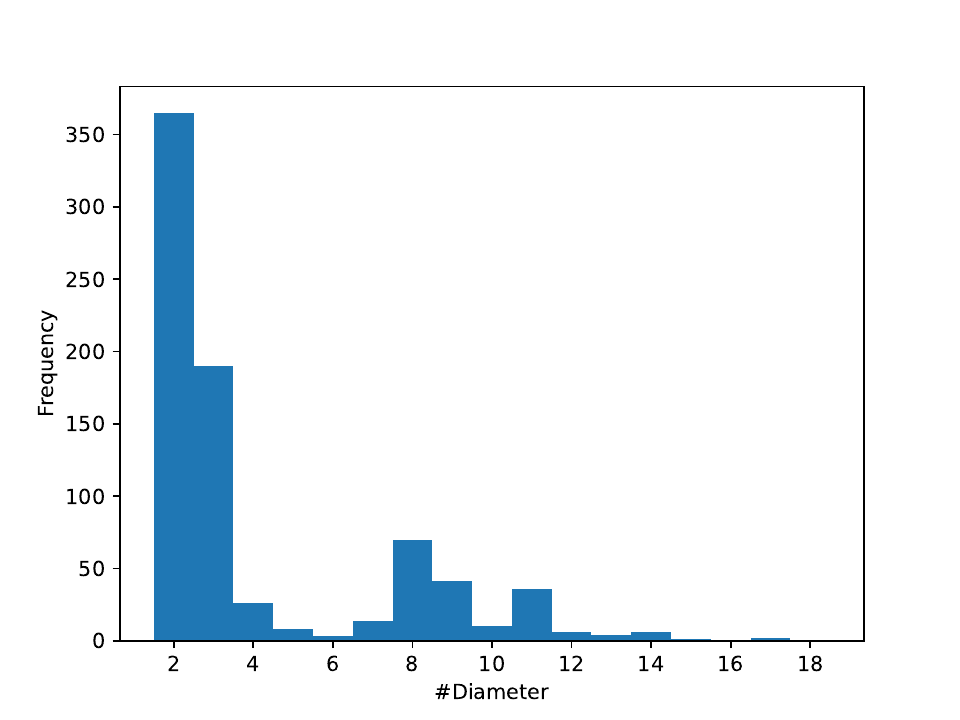}  
        \end{minipage}
    }
    \subfigure[\#Edges Distribution]{   
        \begin{minipage}[t]{0.31\textwidth}
        \centering    
        \includegraphics[width=1.0\linewidth]{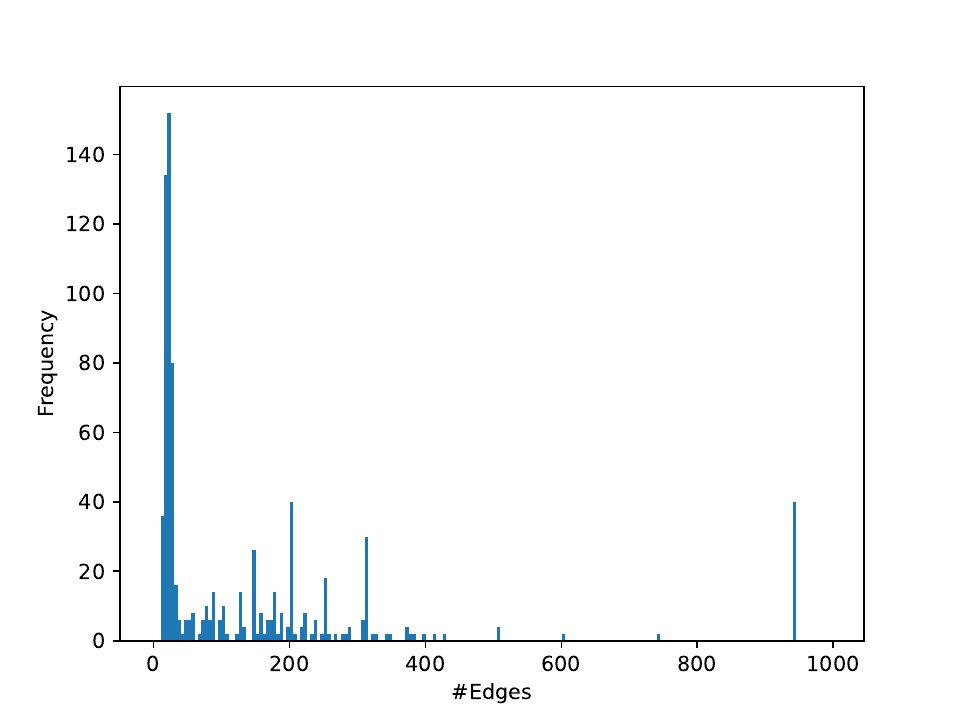}  
        \end{minipage}
    }
    \subfigure[Diameter Distribution]{   
        \begin{minipage}[t]{0.31\textwidth}
        \centering    
        \includegraphics[width=1.0\linewidth]{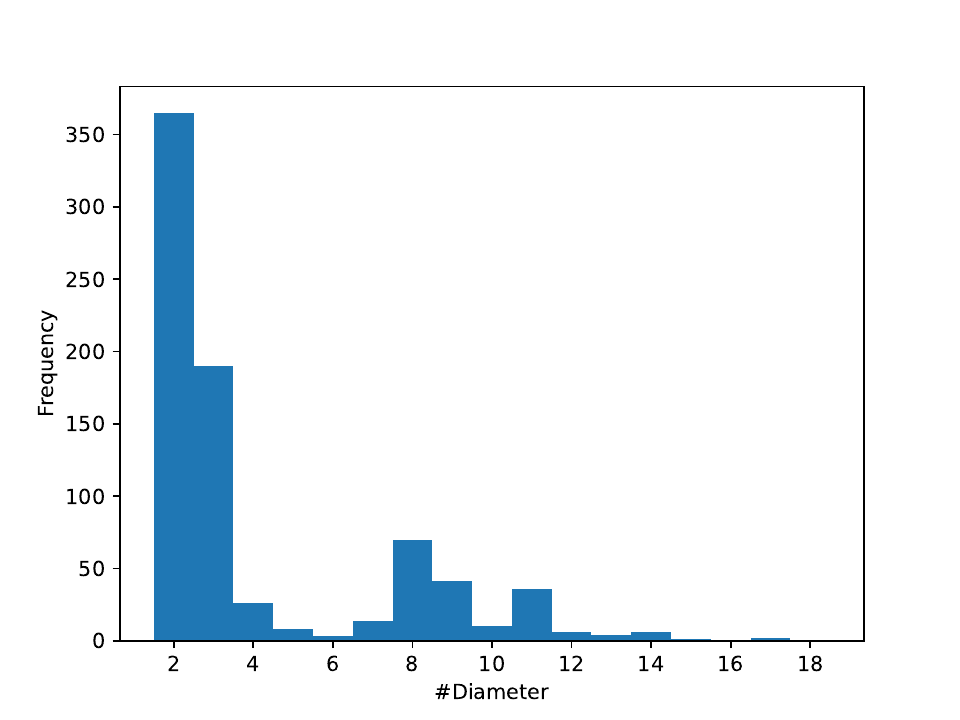}  
        \end{minipage}
    }
\vspace{-0.1in}
  \caption{BREC Statistics}
  \label{fig:statistics}
\end{figure}

\section{Ablation Study on Training and Random Weight}
\label{app:training}
In this section, we conducted an ablation study to investigate the necessity of training. By examining the performance degradation resulting from the removal of the training phase, we confirmed the essential nature of the training process. The results are presented in Table~\ref{tab:training}.

\begin{table}
\caption{Ablation study on training}
\label{tab:training}
\begin{center}
\begin{tabular}{lccc}
\toprule
Model & Random Performance & Training Performance & Performance Degradation Compared to Original Results  \\

\midrule
NGNN & 166 & 166 & 0 $\%$ \\
DE+NGNN & 226 & 231 & 2.2 $\%$ \\
DS-GNN & 80 & 222 & 64.0 $\%$ \\
DSS-GNN & 142 & 221 & 35.7 $\%$ \\
SUN & 216 & 223 & 3.1 $\%$ \\
SSWL\_P & 224 & 248 & 9.7 $\%$ \\
GNN-AK & 219 & 222 & 1.4 $\%$ \\
KP-GNN & 253 & 275 & 8.0 $\%$ \\
$I^2$-GNN & 267 & 281 & 5.0 $\%$ \\
PPGN & 212 & 233 & 9.0 $\%$ \\
$\delta$-k-LGNN & 58 & 216 & 73.1 $\%$ \\
KC-SetGNN & 207 & 211 & 1.9 $\%$ \\
GSN & 254 & 254 & 0 $\%$ \\
DropGNN & 8 & 177 & 95.5 $\%$ \\
OSAN & 124 & 148 & 16.2 $\%$ \\
Graphormer & 79 & 79 & 0 $\%$ \\
\bottomrule
\end{tabular}
\end{center}
\end{table}

\section{Abliation Study on RPC}
\label{app:rpc_ablation}
\begin{table}
\caption{Ablation study on RPC}
\label{tab:rpc_ablation}
\begin{center}
\resizebox{1\textwidth}{!}{
\begin{tabular}{lccccccc}
\toprule
Model & Different\_1E-4 & Same\_1E-4 & Different\_1E-1 & Same\_1E-1 & Each\_Pair\_Threshold & Largest\_Threshold& RPC \\

\midrule
NGNN & 331 & 160 & 191 & 25 & 290 & 161 & 166 \\
DE+NGNN & 400 & 214 & 297 & 75 & 322 & 221 & 231\\
DS-GNN & 241 & 88 & 223 & 19 & 260 & 201 & 222\\
DSS-GNN & 247 & 168 & 186 & 41 & 261 & 15 & 221\\
SUN & 263 & 221 & 199 & 48 & 255 & 44 & 223\\
SSWL\_P & 255 & 109 & 223 & 0 & 264 & 229 & 248\\
GNN-AK & 234 & 18 & 128 & 0 & 244 & 217 & 222\\
KP-GNN & 304 & 136 & 260 & 9 & 295 & 206 & 275\\
I$^2$-GNN & 336 & 123 & 261 & 0 & 304 & 281 & 281\\
PPGN & 392 & 189 & 339 & 107 & 315 & 0 & 233\\
$\delta$-k-LGNN & 209 & 0 & 131 & 0 & 245 & 211 & 216\\
KC-SetGNN & 270 & 237 & 211 & 0 & 234 & 211 & 211\\
GSN & 260 & 4 & 254 & 0 & 259 & 254 & 254\\
DropGNN & 400 & 400 & 400 & 400 & 327 & 191 & 177\\
OSAN & 0 & 0 & 0 & 0 & 236 & 1 & 148\\
Graphormer & 0 & 0 & 0 & 0 & 167 & 68 & 79\\
\bottomrule
\end{tabular}
}
\end{center}
\end{table}

To evaluate the necessity of RPC, we conducted an ablation study to compare it with a simple threshold strategy. The datasets, training parameters, and sample sizes remained constant; the only variable was the evaluation method. We calculated the mean embeddings for two graph sets requiring differentiation and used the L2 norm of their differences as the distance metric. We then compared the outcomes using various threshold levels, with the results detailed in Table~\ref{tab:rpc_ablation}.

The configuration 'Different\_1E-4' indicates that two distinct sets of graphs are differentiated with a threshold of 1E-4; the number associated with this configuration reflects the count of successfully distinguished graph pairs. Conversely, 'Same\_1E-4' applies to graph sets from identical graphs with the same threshold. Here, a result of 0 would imply complete accuracy, as no false positives—incorrectly distinguished pairs—are expected. Under the 'Largest\_Threshold' setting, the threshold is determined by the greatest distance among identical graphs and then applied to different graphs to secure optimal and trustworthy outcomes. Conversely, in the 'Each\_Pair\_Threshold' configuration, a unique threshold for each graph pair is set based on the distance among identical graphs, but this method is still considered undependable. For indistinguishable graphs, the distances between identical sets and different sets are thought to be from the same statistical distribution. Hence, there is a 50\% chance that the distance between identical sets could be greater than that between different sets, potentially resulting in erroneous identification. The 'RPC' describes the outcomes when using RPC for evaluations with the same results as in main Table~\ref{tab:experiment-1}. 

The study revealed that a small fixed threshold often leads to unreliable results, as shown by the misclassified samples in 'Same\_1E-4'. The issue persists with larger thresholds and is further compounded by performance decline. Using a specific threshold for each graph pair ('Each\_Pair\_Threshold') is also unreliable. Certain methods, such as DE+NGNN and PPGN, have at times failed to outperform 3-WL. Nevertheless, these methods have demonstrated superior performance to 3-WL, which reliably discriminates between 270 graph pairs. While the 'Largest\_Threshold' method appears reliable, we noted a decline in performance, especially for some models.

In conclusion, our RPC illustrates a clear advantage in providing robust and dependable results with a user-friendly and widely applicable approach, underscoring its importance.

\section{RAPC: a Reliable and Adaptive Evaluation Method}
\label{app:rapc}
\begin{figure}
    \centering
    \includegraphics[width=0.7\textwidth]{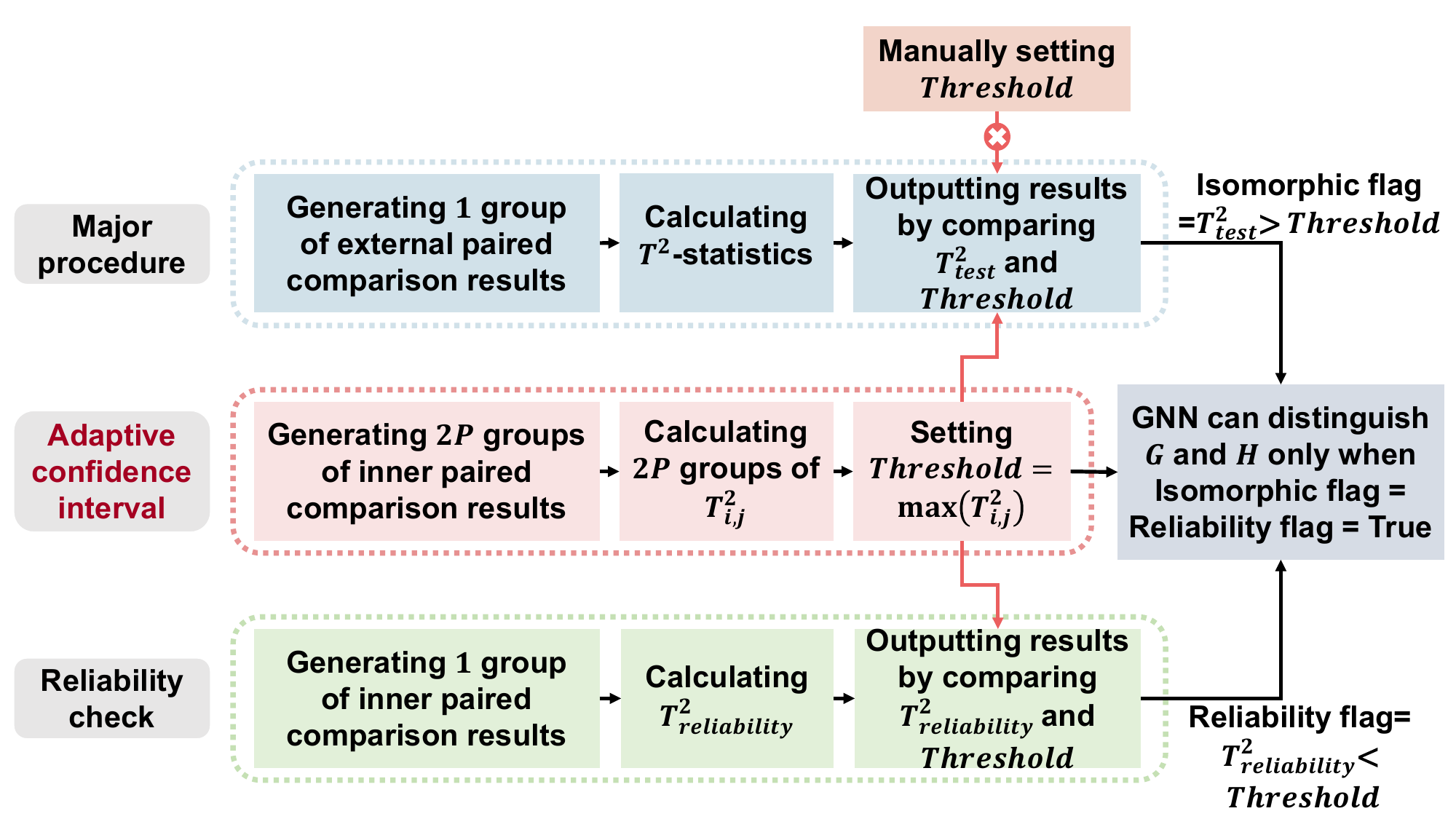}
    \vspace{-0.1in}
    \caption{RAPC pipeline.}
    \label{fig:RAPC}
    \vspace{-0.2in}
\end{figure}
In this section, we propose RAPC with an additional stage called adaptive confidence interval based on RPC. 
Though RPC performs excellently in experiments with a general theoretical guarantee in reliability, with manually setting $\alpha$. 
We still want to make the procedure more automated. In addition, we found that the inner fluctuations of each pair, i.e. $T^2_{\text{reliability}}$, vary from pairs. 
This means some graph outputs are more stable than others, and their threshold can be larger than others.
However, it is impossible to manually set the confidence interval ($\alpha$) for all pairs, thus, we propose an adaptive confidence interval method to solve this problem. 
The key idea is to set the threshold according to minimum internal fluctuations.

Given a pair of non-isomorphic graphs $\gG$ and $\gH$ to be tested. For simplicity, we rename $\gG$ as $\gG_{1}$, $\gH$ as $\gG_{2}$. 
For each graph ($\gG_{1}$ and $\gG_{2}$), we generate $p$ groups of graphs, with each group containing $2q$ graphs, represented by:
\begin{equation}
    \gG_{i, j, k}, ~i \in [2], ~j \in [p], ~k \in [2q].
\end{equation}
Similarly, we can calculate $T^2$-statistics for each group ($2p$ groups in total):
\begin{equation}
    T^2_{i, j} = q \overline{\vd}_{i,j}^{T} \mS_{i,j} \overline{\vd}_{i,j}, ~i \in [2], ~j \in [p].
\end{equation}
where
\begin{equation}
    \begin{aligned}
    & \overline{\vd}_{i, j} = \frac{1}{q}\sum_{k=1}^{q} \vd_{i,j,k}, ~\vd_{i,j,k} =  f(\gG_{i,j,k}) -  f(\gG_{i,j,k+q}), ~i \in [2], ~j \in [p], ~k \in [q], \\
    & \mS_{i, j} = \frac{1}{q-1}\sum_{j=1}^{q}(\vd_{i,j,k} - \overline{\vd}_{i, j}) (\vd_{i,j,k} - \overline{\vd}_{i, j})^{T}. 
    \end{aligned}
\end{equation}

Similar to major procedure, we can conduct an $\alpha$-level test of $H_{0}:\delta = \mathbf{0}$ versus $H_{1}:\delta \ne \mathbf{0}$, 
it should always accept $H_0$(the GNN cannot distinguish them) since the $2q$ graphs in each group are essentially the same. 
And $T^2$-statistics should satisfy the:
\begin{equation}
    T^{2}_{i, j} = q \overline{\vd}_{i,j}^{T} \mS_{i,j} \overline{\vd}_{i,j} < \frac{(q-1)n}{(q-n)}F_{n,q-n}(\alpha).
    \label{equ:t2-ip}
\end{equation}

If the GNN can distinguish the pair, $T^2_{\text{test}}$ in major procedure and $T^2_{i,j}$ in adaptive confidence interval should satisfy the:
\begin{equation}
    T^{2}_{\text{test}} > \frac{(q-1)n}{(q-n)}F_{n,q-n}(\alpha) > T^{2}_{i, j}, \forall i \in [2], ~j\in [p].
\end{equation}

Thus we set the adaptive confidence interval as $\text{Threshold} = \text{Max}_{i \in \{1,2\}, ~p\in\{1,\dots,P\}}\{T^{2}_{i, p}\}$. 
Then we conduct Major Procedure and Reliability Check based on $\text{Threshold}$ similar to RPC. The pipeline is shown in Fig~\ref{fig:RAPC}.

In our analysis of the current evaluation method, we take into account the probabilities of false positives and false negatives. Typically, achieving extremely low levels of both probabilities simultaneously is challenging, and there is often a trade-off between them. However, since false positives can undermine the reliability of the methods, we prioritize establishing stringent bounds for this type of error. On the other hand, false negatives are explained in a more intuitive manner, acknowledging their presence but placing greater emphasis on minimizing false positives.

Regarding false positives, we give the following theorem.

\begin{theorem} 
\label{thm:false_positive}
The false positive rate with adaptive confidence interval is $\frac{1}{2^{2P}}$.
\end{theorem}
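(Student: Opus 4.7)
The plan is to compute, under the null hypothesis that the GNN cannot distinguish $\gG$ and $\gH$, the probability that the adaptive test nevertheless rejects the null, by exploiting a symmetry / exchangeability argument on the full collection of $T^2$-statistics that enter the procedure.

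First I would make the null precise. ``$f$ cannot distinguish $\gG$ and $\gH$'' means that the paired differences $\vd_i = f(\gG_i) - f(\gH_i)$ satisfy $\vmu = \mathbf{0}$; by Assumption~\ref{ass:n_p} the $\vd_i$ are therefore i.i.d.\ $\mathcal{N}(\mathbf{0},\mSigma)$, so $T^2_{test}$ is exactly a Hotelling statistic under its null. Each adaptive statistic $T^2_{i,p}$ is built from differences between two random reindexings of the same graph, so the underlying differences are automatically centered, and under the same Gaussian reindexing-noise hypothesis are i.i.d.\ $\mathcal{N}(\mathbf{0},\mSigma)$ as well. Hence each $T^2_{i,p}$ shares the common null distribution of $T^2_{test}$. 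Because the random reindexings across the $2P$ groups are generated independently of each other and of the test pair, the $2P+1$ statistics $\{T^2_{test}\}\cup\{T^2_{i,p}\}$ are mutually independent and identically distributed.

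Second, I would evaluate the false-positive event $\{T^2_{test} > \textit{Threshold}\} = \{T^2_{test} > T^2_{i,p} \text{ for every } i\in[2],\,p\in[P]\}$. For any fixed $(i,p)$, continuity and i.i.d.-ness give $\Pr[T^2_{test} > T^2_{i,p}] = \tfrac{1}{2}$. The bound $1/2^{2P}$ will follow once the $2P$ pairwise comparisons are treated as independent Bernoulli($\tfrac{1}{2}$) trials whose conjunction multiplies out. A convenient implementation is to condition on $T^2_{test} = t$: given $t$, each comparison $\{T^2_{i,p} < t\}$ becomes an independent Bernoulli trial whose success probability depends only on the common null CDF evaluated at $t$, and the conditional joint probability factorizes as a product over $(i,p)$. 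Integrating back over $t$ yields the target rate. Finally, the reliability-check clause $T^2_{reliability} < \textit{Threshold}$ can only intersect the event further, so $1/2^{2P}$ survives at least as an upper bound on the false-positive rate.

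The main obstacle is precisely the factorization step, because the $2P$ comparisons all literally share the random variable $T^2_{test}$ and therefore are not unconditionally independent; a naive symmetry count over the $2P+1$ equally likely maxima would instead yield $1/(2P+1)$. The delicate part of the argument will be to identify the extra structure that upgrades the conditional Bernoulli factorization into the exact $1/2^{2P}$ figure rather than a weaker mixture estimate, most plausibly via a coupling between the $P$ groups built on $\gG_1$ and the $P$ groups built on $\gG_2$ that makes each of the $2P$ comparisons stochastically orthogonal. Once that identity is in place, verifying that the reliability check does not alter the bound is a routine monotonicity observation.
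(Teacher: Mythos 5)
Your setup is faithful to the paper's: you reduce the false-positive event to $\{T^2_{test} > \max_{i,p} T^2_{i,p}\}$ for $2P+1$ statistics that, under the null and Assumption~\ref{ass:n_p}, are i.i.d.\ continuous random variables, and you correctly compute $\Pr[T^2_{test}>T^2_{i,p}]=\tfrac12$ for each single comparison. The gap you flag at the end, however, is not a technicality you can expect to repair: carrying out the very conditioning you propose settles the matter against the stated rate. Writing $F$ for the common null CDF, conditioning on $T^2_{test}=t$ gives $\Pr[\,T^2_{i,p}<t \ \forall i,p\,]=F(t)^{2P}$, and integrating out $t$ yields
\begin{equation*}
\Pr\Bigl[T^2_{test} > \max_{i\in[2],\,p\in[P]} T^2_{i,p}\Bigr] \;=\; \E\bigl[F(T^2_{test})^{2P}\bigr] \;=\; \E\bigl[U^{2P}\bigr] \;=\; \frac{1}{2P+1},
\end{equation*}
with $U$ uniform on $[0,1]$ --- exactly the symmetry count you mention ($T^2_{test}$ is the maximum of $2P+1$ exchangeable variables with probability $\tfrac{1}{2P+1}$). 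There is no coupling or ``extra structure'' that upgrades this to $1/2^{2P}$: the $2P$ comparison events all contain the same random variable $T^2_{test}$ and are positively associated, so their joint probability is strictly larger than the product of the marginals, i.e.\ $\tfrac{1}{2P+1} > \tfrac{1}{2^{2P}}$ for every $P\ge 1$.

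For what it is worth, the paper's own proof commits precisely the factorization you declined to make: it passes from $P(T^2_{test}>T^2_{i,j})=\tfrac12$ for each $(i,j)$ directly to $P(T^2_{test}>\max_{i,j}T^2_{i,j})=\tfrac{1}{2^{2P}}$ as if the $2P$ comparisons were independent. So your proposal is incomplete, but the incompleteness is diagnostic rather than a personal failing: under the paper's own assumptions the false positive rate of the adaptive threshold (before the reliability check) is $\tfrac{1}{2P+1}$, not $\tfrac{1}{2^{2P}}$, and the claimed figure is not even an upper bound. Your closing monotonicity remark about the reliability check is sound as far as it goes --- intersecting with a further event can only shrink the probability --- but it cannot rescue a bound whose premise is already false.
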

\begin{proof}
\label{proof:false_positive}
    We first define false positives more formally. 
    False positives mean the GNN $f$ cannot distinguish $\gG$ and $\gH$, but we reject $H_0$ and accept $H_1$. 
    $f$ cannot distinguish $\gG$ and $\gH$ means $f(\gG) = f(\gH) = f(\gG^{\pi}) \sim \mathcal{N}(\vmu_{\gG}, \mSigma_{\gG})$. 
    Since $\vd_{i}$ in major procedure and $\vd_{i,j,k}$ in adaptive confidence interval are derived from paired comparison by same function outputs, i.e., 
    from $f(\gG)$ and $f(\gH)$, and from $f(\gG)$ and $f(\gG^{\pi})$, respectively.
    $\vd_{i}$ and $\vd_{i,j,k}$ should follow the same distribution, 
    leading that $T^2_{\text{test}}$ and $T^2_{i, j}$ are independently random variables following the same distribution.
    Thus $P(T^2_{\text{test}} > T^2_{i, j}) = \frac{1}{2}$. 
    Then we can calculate the probability of false positives as 
    \begin{equation}
        P(\textit{Rejecting} ~H_{0}) = P(T^2_{\text{test}} > \textit{Threshold} = \textit{Max}_{i \in [2], ~j\in [p]}\{T^{2}_{i, j}\}) = \frac{1}{2^{2p}}.         
    \end{equation}
    Thus we proof theorem~\ref{thm:false_positive}.
\end{proof}
Regarding false negatives, we propose the following explanation. A small threshold can decrease the false negative rate. 
Thus without compromising the rest of the theoretical analysis, we give the minimum value of the threshold. 
Equation~\ref{equ:t2-ip} introduces a minimum threshold restriction. 
We obtain the threshold strictly based on it by taking the maximum value, which is the theoretical minimum threshold that minimizes the false negative rate.

\section{Subgraph GNNs}
\label{app:subgraph}
\begin{table}
\caption{A general theoretical expressiveness upper bound of subgraph with radius $k$}
\label{tab:subgraph}
\begin{center}
\resizebox{0.8\textwidth}{!}{
\begin{tabular}{lcccccccccc}
\toprule
Radius & 1 & 2 & 3 & 4 & 5 & 6 & 7 & 8 & 9 & 10\\
\midrule
\#Accurate on BREC & 252 & 298 & 300 & 327 & 326 & 385 & 398 & 398 & 399 & 400 \\
\bottomrule
\end{tabular}
}
\end{center}
\end{table}
In this section, we discuss settings for subgraph GNN models. 
The most important setting is the subgraph radius. 
As discussed before, a larger radius can capture more structural information, increasing the model's expressiveness. 
However, it will include more invalid information, making reaching the theoretical upper bound harder. 
Thus we need to find a balance between the two. 

To achieve this, we first explore the maximum structural information that can be obtained under a given radius. 
Following \citet{papp2022theoretical}, we implement $N_{k}$ method, which embeds the isomorphic type of $k$-hop subgraph when initializing. 
This method is only available in the theoretical analysis as one can not solve the GI problem by manually giving graph isomorphic type. 
We mainly use it as a general expressiveness upper bound of subgraph GNNs. The performance of $N_{k}$ on BREC is shown in Table~\ref{tab:subgraph}. 
Actually, $N_{3}$ already successfully distinguishes all graphs except for CFI graphs. 
$k=6$ is an important threshold as $N_{k}$ outperforms 3-WL (expressiveness upper bound for most subgraph GNNs \citep{frasca2022understanding,zhang2023complete}) in all types of graphs. 
An interesting discovery is that increasing the radius does not always lead to expressiveness increasing as expected. 
This is caused by the fact that we only encode the exact $k$-hop subgraph instead of 1 to $k$-hop subgraphs. 
This phenomenon is similar to subgraph GNNs, revealing the advantages of using distance encoding. 

We then test the subgraph GNNs' radii by increasing them until reaching the best performance, which is expected to be a perfect balance. 
For some methods, radius$=6$ is the best selection, which is consistent with the theory. 
The exceptions are NGNN, NGNN+DE, KPGNN, I$^2$-GNN and SSWL\_P. NGNN directly uses an inner GNN to calculate subgraph representation, whose expressiveness is restricted by the inner GNN.   
As the subgraph radius increases, though the subgraph contains information, the simple inner GNN can hardly give a correct representation. 
That's why radius$=1$ is the best setting for NGNN. 
NGNN+DE and I$^2$-GNN add distance encodings, making the subgraph with a large radius can always clearly extract a subgraph with a small radius. 
Therefore, a large radius$=8$ is available. 
KPGNN utilizes a similar setting by incorporating distance to subgraph representation, and radius$=8$ is also the best setting. 
KPGNN can also use graph diffusion to replace the shortest path distance. 
Though graph diffusion outperforms some graphs, the shortest path distance is generally a better solution. 
Previous findings reveal the advantages of using distance, which we hope can be more widely used in further research. 
SSWL\_P achieves better expressiveness with theoretical minimum components, making more information available.

\section{$k$-WL Hierarchy GNNs}
\label{app:kwl}
\begin{table}
    \caption{The performance of 3-WL with different iteration times}
    \label{tab:3wl}
    \begin{center}
    \resizebox{0.5\textwidth}{!}{
    \begin{tabular}{lcccccccccc}
    \toprule
    Iterations & 1 & 2 & 3 & 4 & 5 \\
    \midrule
    \#Accurate on BREC & 193 & 209 & 217 & 264 & 270\\
    \bottomrule
    \end{tabular}
    }
    \end{center}
    \end{table}
In this section, we discuss settings for $k$-WL hierarchy GNN models.
$k$-WL algorithm requires a converged tuple embedding distribution for GI. 
However, $k$-WL hierarchy GNNs do not have the definition of converging. It will output the final embeddings after a specific number of layers, i.e., the iteration times of $k$-WL.
Thus we need to give a suitable number of layers where the $k$-WL converged after the number of iteration times. 
In theory, increasing the number of layers always leads to a non-decreasing expressiveness, 
since the converged distribution will not change furthermore. 
However, more layers may cause over-smoothing, leading to worse performance in practice.  

To keep a balance, we utilize similar methods for subgraph GNNs. 
We first analyze the iteration times of $3$-WL, shown in Table~\ref{tab:3wl}. 
One can see $6$ iteration times are enough for all types of graphs. 
Then we increase the layers of $k$-WL GNNs until reaching the best performance. 
We finally set 5 layers for PPGN, 4 layers for KCSet-GNN and 6 layers for $\delta$-k-LGNN.

\section{Substructure-based GNNs}

\label{app:substructure}
\begin{table}
\caption{Substructure-based model performance on BREC}
\label{tab:substructure}
\begin{center}
\resizebox{1.0\textwidth}{!}{
\begin{tabular}{lcccccccccc}
\toprule
~ & \multicolumn{2}{c}{Basic Graphs (60)} & \multicolumn{2}{c}{Regular Graphs (140)} & \multicolumn{2}{c}{Extension Graphs (100)} & \multicolumn{2}{c}{CFI Graphs (100)} & \multicolumn{2}{c}{Total (400)}\\
\cmidrule(r{0.5em}){2-3} \cmidrule(l{0.5em}){4-5}\cmidrule(l{0.5em}){6-7}\cmidrule(l{0.5em}){8-9}\cmidrule(l{0.5em}){10-11}
Model &  Number & Accuracy & Number & Accuracy  & Number & Accuracy  & Number & Accuracy  & Number & Accuracy  \\
\midrule
$S_3$ & 52 & 86.7\% & 48 & 34.3\% & 5 & 5\% & 0 & 0\% & 105 & 26.2\%    \\
$S_4$ & 60 & 100\% & 99 & 70.7\%  & 84 & 84\% & 0 & 0\% & 243 & 60.8\% \\
\midrule
GSN-v(k=3) & 52 & 86.7\% & 48 & 34.3\% & 5 & 5\% & 0 & 0\% & 105 & 26.2\%    \\
GSN-v(k=4) & 60 & 100\% & 99 & 70.7\%  & 84 & 84\% & 0 & 0\% & 243 & 60.8\% \\
GSN-e(k=3) & 59 & 98.3\% & 48 & 34.3\%  & 52 & 52\% & 0 & 0\% & 159 & 39.8\% \\
GSN & 60 & 100\% & 99 & 70.7\%  & 95 & 95\% & 0 & 0\% & 254 & 63.5\% \\

\bottomrule
\end{tabular}
}
\end{center}
\end{table}

In this section, we discuss the performance of substructure-based GNN models. Specifically, we focus on the GSN (Graph Substructure Network) model proposed by \citet{bouritsas2022improving}, which offers a straightforward neural network implementation, denoted as GSN-v, of the $S_k$ substructure. Additionally, we introduce GSN-e, a slightly stronger version of GSN-v that incorporates features on edges instead of just nodes.

Experimental results presented in Table~\ref{tab:substructure} demonstrate that GSN-v achieves a perfect match with the performance of $S_k$. Furthermore, GSN-e outperforms GSN-v, indicating superior performance when edge features are included.


\section{Random GNNs}
\label{app:random}

\begin{table}
\caption{The performance of DropGNN with different sample numbers}
\vspace{0.1in}
\label{tab:dropgnn}
\begin{center}
\resizebox{0.5\textwidth}{!}{
\begin{tabular}{lccccccccccc}
\toprule
    \#Samples & 100 & 200 & 400 & 800 & 1200 & 1600 \\
    \midrule
    \#Accurate on BREC & 177 & 222 & 242 & 253 & 260 & OOM\\
    \bottomrule
    \end{tabular}
    }
\end{center}
\end{table}
In this section, we delve into the settings for random GNNs. Random GNNs leverage samples from graphs using specific strategies, and both the number of samples and the sampling strategies have an impact on performance.

For DropGNN, the sampling strategy revolves around a relatively straightforward approach of deleting nodes. As for the number of samples, it is recommended to set it to the average number of nodes in the dataset. In our reported results, we set the number of samples to 100, which aligns with the average number of nodes. The ablation study results on the number of samples can be found in Table~\ref{tab:dropgnn}.

Another approach, OSAN, proposes a data-driven method that achieves similar performance with fewer samples. This is achieved by training the model to select diverse samples. However, it requires an additional training framework and may not necessarily lead to improved performance. In our case, we select the edge-deleting strategy and set the number of samples to 20.


\section{Tight Bound Expressiveness Measurement}
\label{app:tight}
In this section, we aim to clarify two concepts: “expressiveness upperbound” and “expressiveness tight bound.” It is important to differentiate between these two concepts, with the former typically referring to a model whose expressiveness upperbound is aligned with a specific hierarchy (e.g., 3-WL), and the latter indicating the precise expressiveness achievable through an implementation. While previous research has predominantly focused on the “expressiveness upperbound” due to its connection to other frameworks and ease of analysis and comparison, the concept of “expressiveness tight bound” has received less attention as it is often considered in isolation from existing work.

Nevertheless, a tight bound can offer significant benefits within our benchmark, serving as a universal framework for evaluating and comparing all results. To address this gap, we have collected the exact theoretical approximations of various models and initially implemented them to improve the understanding of expressiveness, as shown in Table~\ref{tab:tight}.
\begin{table}
\caption{Tight bound expressiveness}
\vspace{0.2in}
\label{tab:tight}
\begin{center}
\begin{tabular}{lcccc}
\toprule
Model & Expressiveness Tight Bound & Theoretical Performance & Practical Performance \\

\midrule
NGNN & NGNN-WL with $k = 1$ & 167 & 166 \\ 
DE+NGNN & DENGNN-WL with $k = 8$ & 247 & 231 \\
DS-GNN & SWL(VS) with $k = 6$ & 243 & 222 \\
DSS-GNN & GSWL(VS) with $k = 6$ & 232 & 221 \\
SUN & GSWL(VS) with $k = 6$ & 232 & 223 \\
SSWL\_P & SSWL(VS) with $k = 8$ & 262 & 258 \\
GNN-AK & GSWL(VS) with $k = 6$ & 232 & 222 \\
KP-GNN & KPGNN-WL with $k = $8 & 291 & 275 \\
$I^2$-GNN & $I^2$-GNN-WL & 301 & 281 \\
PPGN & 3-WL & 270 & 233 \\
$\delta$-k-LGNN & LWL-Plus & 267 & 216 \\
KC-SetGNN & KC-SetWL & 213 & 211 \\
GSN & $S_4$ on edge & 254 & 254 \\
DropGNN & $M_1$ & 251 & 177 \\
OSAN & Unlimited & 400 & 148 \\
Graphormer & SPD-WL & 83 & 79 \\

\bottomrule
\end{tabular}

\end{center}
\end{table}

For SWL, GSWL, and SSWL, we recommend referencing the paper by~\citet{zhang2023complete}. When referring to “unlimited,” we are discussing OSAN utilizing a random edge deletion strategy, which theoretically offers unlimited expressiveness but is challenging to achieve in practical applications. In the context of $S_4$ on the edge, we are describing the process of incorporating substructure information into edges and utilizing the 1-WL algorithm. As for NGNN-WL, KPGNN-WL, $I^2$-GNN-WL, LWL-Plus, and KC-SetWL, we are referring to enhancing the original models by replacing model layers with appropriate multiset and hash functions

\section{Experiment Settings}
\label{app:experiments}
\begin{table}
\caption{Model hyperparameters}
\label{tab:hyperparameters}
\begin{center}
\resizebox{1\textwidth}{!}{
\begin{tabular}{lccccccccc}
\toprule
Model & Radius & Layers & Inner dim & Learning rate & Weight decay & Batch size 
& Epoch & Early stop threshold \\

\midrule
NGNN & 1 & 6 & 16 &  $1e-4$ & $1e-5$ & 32 & 20 & 0.01\\
DE+NGNN & 8 & 6 & 128 & $1e-4$ & $1e-5$ & 32 & 30 & 0.01\\
DS-GNN & 6 & 10 & 32   & $1e-4$ & $1e-5$  & 32 & 30  & 0 \\
DSS-GNN & 6 & 9 & 32   & $1e-4$ & $1e-4$  & 32 & 20  & 0.01 \\
SUN & 6 & 9 & 32 &  $1e-4$ & $1e-4$  & 32 & 20  & 0.01 \\
SSWL\_P & 8 & 8 & 64 & $1e-5$ & $1e-5$ & 8 & 20 & 0.1 \\
GNN-AK & 6 & 4 & 32 & $1e-4$ & $1e-4$  & 32 & 10  & 0.1 \\
KP-GNN & 8 & 8 & 32 & $1e-4$ & $1e-4$ & 32 & 20 & 0.3\\
I$^2$GNN & 8 & 5 & 32 & $1e-5$ & $1e-4$ & 16 & 20 & 0.2 \\
PPGN & / & 5 & 32 &  $1e-4$ & $1e-4$  & 32 & 20  & 0.2 \\
$\delta$-k-LGNN & / & 6 & 16 & $1e-4$ & $1e-4$ & 16 & 20 & 0.2 \\

KC-SetGNN & / & 4 & 64  & $1e-4$ & $1e-4$  & 16 & 15  & 0.3 \\

GSN & / & 4 & 64 & $1e-4$ & $1e-5$ & 16 & 20 & 0.1 \\
DropGNN & / & 10 & 16 & $1e-3$ & $1e-5$ & 16 & 100 & 0 \\
OSAN & / & 8 & 64 & $1e-3$ & $1e-5$ & 16 & 40 & 0 \\
Graphormer & / & 12 & 80 & $2e-5$ & 0 & 16 & 100 & 0 \\

\bottomrule
\end{tabular}
}
\end{center}
\end{table}

All experiments were performed on a machine equipped with an Intel Core i9-10980XE CPU, an NVIDIA RTX4090 graphics card, and 256GB of RAM.

\textbf{RPC settings.}
For non-GNN methods, the output results are uniquely determined, and as such, this part of the experiment does not require RPC. It is worth noting that most non-GNN baselines involve running graph isomorphism testing software on subgraphs, and they mainly serve as theoretical references in our evaluation.

Regarding GNNs, we employ RPC with $q=32$ and $d=16$ to evaluate their performance. Considering a confidence level of $\alpha=0.95$, which is a typical setting in statistics, the threshold should be set to $\frac{(q-1)d}{(q-d)}F_{d, q-d}(\alpha) = 31F_{16,16}(0.95) = 72.34$.

To ensure robustness, we repeat all evaluation methods ten times using different seeds selected from the set $\{100,200,\dots,1000\}$. We consider the final results reliable only if the model passes the Reliability Check for all graphs with any seed, meaning that the quantification of the output embedding distance between isomorphic pairs is always smaller than the threshold. The reported results are selected as the best results rather than the average, as we aim to explore the upper bound of expressiveness.


\textbf{Training settings.}
We employ a Siamese network design and utilize the cosine similarity loss function. Another commonly used loss function is contrastive loss \citep{hadsell2006dimensionality}, which directly calculates the difference between two outputs. However, we opt for cosine similarity loss due to its advantage of measuring output difference under the same scale through normalization. This approach prevents model outputs from being excessively amplified, which could otherwise magnify minor precision errors and treat them as differentiated results of the model.

We use the Adam optimizer with a learning rate searched from $\{1e-3, 1e-4, 1e-5\}$, weight decay selected from $\{1e-3, 1e-4, 1e-5\}$, and batch size chosen from $\{8, 16, 32\}$. Graphormer, on the other hand, follows the original training settings on ZINC.

We incorporate an early stopping strategy, which halts training when the loss reaches a small value. While for random GNNs, we do not utilize early stopping.
The maximum number of epochs is typically set to around 20 since the model can often distinguish a pair relatively quickly.

\textbf{Model hyperparameters.}
The most crucial hyperparameters related to expressiveness, such as the subgraph radius for subgraph GNNs and the number of layers for $k$-WL hierarchy GNNs, are determined through theoretical analysis, as outlined in Appendix~\ref{app:subgraph} and \ref{app:kwl}. These hyperparameters have a direct impact on the expressiveness of the models.

Other hyperparameters also implicitly influence expressiveness. We generally adopt the same settings as previous expressiveness datasets, with two exceptions: inner embedding dimension and batch normalization.

The inner embedding dimension reflects the model's capacity. For smaller and simpler expressiveness datasets used in the past, a small embedding dimension has been sufficient. However, the appropriate embedding dimension for BREC is unknown, so we generally conduct a search within the range of ${16, 32, 64, 128}$.

Additionally, we utilize batch normalization for all models, even though it may not have been used in all previous models. Batch normalization helps control the outputs within a suitable range, which can be beneficial for distinguishing graph pairs.

The detailed hyperparameter settings for each method are provided in Table~\ref{tab:hyperparameters}.

\textbf{Model Costs.}
We present the time consumption and model parameter size in Table~\ref{tab:cost} to offer additional information that helps users to better utilize or optimize the models. Time consumption is divided into two parts: preprocessing time and evaluation time. Preprocessing time refers to the time taken to prepare the dataset, including tasks like computing substructures or preprocessing subgraphs. Usually, preprocessing occurs once during hyperparameter tuning, except in cases such as when adjustments to the subgraph radius are made. Evaluation time denotes the time required to assess each particular combination of hyperparameters.

To address potential memory constraints or prolonged processing times encountered by some methods when handling complex graphs, we have excluded 4-vertex condition graphs and 40 pairs of 3-WL-indistinguishable CFI graphs from their evaluation. The time consumption for these methods is indicated as ‘$>K$’, showing that processing time exceeds a certain threshold K. The symbol ‘$>>K$’ signifies that only those graphs distinguishable by the 3-WL test have been retained. Consequently, strongly regular graphs, 4-vertex condition graphs, distance-regular graphs, and 40 pairs of 3-WL-indistinguishable CFI graphs have been removed. By implementing this strategy, we guarantee that the performance of the models under evaluation is not overestimated, as they fall behind the discriminative power of the 3-WL test.

\begin{table}
\caption{Model costs}
\vspace{0.2in}
\label{tab:cost}
\begin{center}
\begin{tabular}{lcccc}
\toprule
Model & Preprocess Time(s) & Evaluation Time(s) & Total Time(s) & Parameter Size (KB)\\

\midrule
NGNN & 516 & 388 & 904 & 14 \\
DE+NGNN &3123 & 1087 & 4200 & 787\\
DS-GNN & 2343 & 3300 & 5643 & 173\\
DSS-GNN & 2343 & 1008 & 3351 & 162 \\
SUN & 2343 & 1925 & 4268 & 748 \\
SSWL\_P & $>>$644 & $>>$4085 & $>>$4729 & 924 \\
GNN-AK &768 & 1048 & 1816 & 4742\\
KP-GNN &73113 & 873 & 73986 & 366 \\
I$^2$GNN & $>$10460 & $>$7013 & $>$17473 & 236\\
PPGN &47 & 156 & 477 & 90 \\
$\delta$-k-LGNN & 5 & 2532 & 2537 & 62 \\
KC-SetGNN &$>>$118005 & $>>$5670 & $>>$123675 & 1416 \\
GSN &$>$5231 & $>$150 & $>$5381 & 431 \\
DropGNN & 393 & 533 & 926 & 33 \\
OSAN & $<$1 & 187023 & 187023 & 749\\
Graphormer & $>>$66 & $>>$13681 & $>>$13747 & 3667\\

\bottomrule
\end{tabular}

\end{center}
\end{table}



\section{Graph Generation}
\label{app:generation}

In this section, we provide an overview of how the graphs in the BREC dataset were generated.

\textbf{Basic graphs.} 
This category consists of 60 pairs of graphs, each containing 10 nodes. To generate these graphs, the 1-WL algorithm was applied to all 11.7 million graphs with 10 nodes, resulting in a hash value for each graph. Among these graphs, 83,074 happened to have identical hash values as others. From this set, 60 pairs of graphs were randomly selected.

\textbf{Regular graphs.}
This category includes 140 pairs of regular graphs. For the 50 simple regular graphs, the search was conducted for regular graphs with 6 to 10 nodes, and 50 pairs of regular graphs with the same parameters were randomly selected. For the 50 strongly regular graphs, the number of nodes ranged from 16 to 35. The graphs were obtained from sources such as \href{http://www.maths.gla.ac.uk/~es/srgraphs.php}{http://www.maths.gla.ac.uk/~es/srgraphs.php} and \href{http://users.cecs.anu.edu.au/~bdm/data/graphs.html}{http://users.cecs.anu.edu.au/~bdm/data/graphs.html}. For the 20 4-vertex condition graphs, a search was conducted on \href{http://math.ihringer.org/srgs.php}{http://math.ihringer.org/srgs.php}, and the simplest 20 pairs of 4-vertex condition graphs with the same parameters were selected. For the 20 distance regular graphs, a search was performed on \href{https://www.distanceregular.org/}{https://www.distanceregular.org/}, and the simplest 20 pairs of distance regular graphs with the same parameters were chosen.

\textbf{Extension graphs.}
This category consists of 100 pairs of graphs based on comparing results between GNN extensions. The $S_3$, $S_4$, and $N_1$ algorithms were applied to all 1-WL-indistinguishable graphs with 10 nodes. This yielded 4,612 $S_3$-indistinguishable graphs, 1,132 $N_1$-indistinguishable graphs, and 136 $S_4$-indistinguishable graphs. From these sets, 60 pairs of $S_3$-indistinguishable graphs, 20 pairs of $N_1$-indistinguishable graphs, and 10 pairs of $S_4$-indistinguishable graphs were randomly selected. Care was taken to ensure that no graphs were repeated. Additionally, 10 pairs of graphs were added using a virtual node strategy, including 5 pairs obtained by adding a virtual node to a 10-node regular graph and 5 pairs based on $C_{2l}$ and $C_{l,l}$ as described in~\citet{papp2022theoretical}.

\textbf{CFI graphs.}
This category consists of 100 pairs of graphs generated based on the CFI methods proposed by \citet{63543}. All CFI graphs with backbones ranging from 3 to 7-node graphs were generated. From this set, 60 pairs of 1-WL-indistinguishable graphs, 20 pairs of 3-WL-indistinguishable graphs, and 20 pairs of 4-WL-indistinguishable graphs were randomly selected.

These different categories of graphs provide a diverse range of graph structures and properties for evaluating the expressiveness of GNN models.

\end{document}